\theoremstyle{definition}
\newtheorem{lemma}{Lemma}
\newcommand{\ourmodel}{\text{ParIRL}}
\newcommand{\Real}{\mathbb{R}}
\newcommand{\StateSet}{S}
\newcommand{\ActionSet}{A}
\newcommand{\Dynamics}{P}
\newcommand{\RewardFunc}{r}
\newcommand{\Datasets}{{\mathbb{T}}}
\newcommand{\Dataset}{\mathcal{T}}
\newcommand{\PolicySet}{\Pi}
\newcommand{\ParetoPolicySet}{\Pi^*}
\newcommand{\mo}{\text{mo}}
\DeclareMathOperator*{\expectation}{\mathbb{E}}
\DeclareMathSymbol{*}{\mathbin}{symbols}{"03}
\newtheorem{theorem}{Theorem}
\title{Pareto Inverse Reinforcement Learning for Diverse Expert Policy Generation}
\author{
Woo Kyung Kim\and
Minjong Yoo\and
Honguk Woo\thanks{Honguk Woo is the corresponding author.}
\affiliations
Department of Computer Science and Engineering, Sungkyunkwan University\\
\emails
\{kwk2696, mjyoo2, hwoo\}@skku.edu
}
\begin{document}

\maketitle

\begin{abstract}
Data-driven offline reinforcement learning and imitation learning approaches have been gaining popularity in addressing sequential decision-making problems. Yet, these approaches rarely consider learning Pareto-optimal policies from a limited pool of expert datasets. This becomes particularly marked due to practical limitations in obtaining comprehensive datasets for all preferences, where multiple conflicting objectives exist and each expert might hold a unique optimization preference for these objectives.
%
In this paper, we adapt inverse reinforcement learning (IRL) by using reward distance estimates for regularizing the discriminator. This enables progressive generation of a set of policies that accommodate diverse preferences on the multiple objectives, while using only two distinct datasets, each associated with a different expert preference. 
In doing so, we present a Pareto IRL framework ($\ourmodel$) that establishes a Pareto policy set from these limited datasets. 
In the framework, the Pareto policy set is then distilled into a single, preference-conditioned diffusion model, thus allowing users to immediately specify which expert's patterns they prefer.
Through experiments, we show that $\ourmodel$ outperforms other IRL algorithms for various multi-objective control tasks, achieving the dense approximation of the Pareto frontier. 
We also demonstrate the applicability of $\ourmodel$ with autonomous driving in CARLA.
\end{abstract}

\section{Introduction}
\begin{figure}[t] 
    \centering
    \includegraphics[width=.96\linewidth]{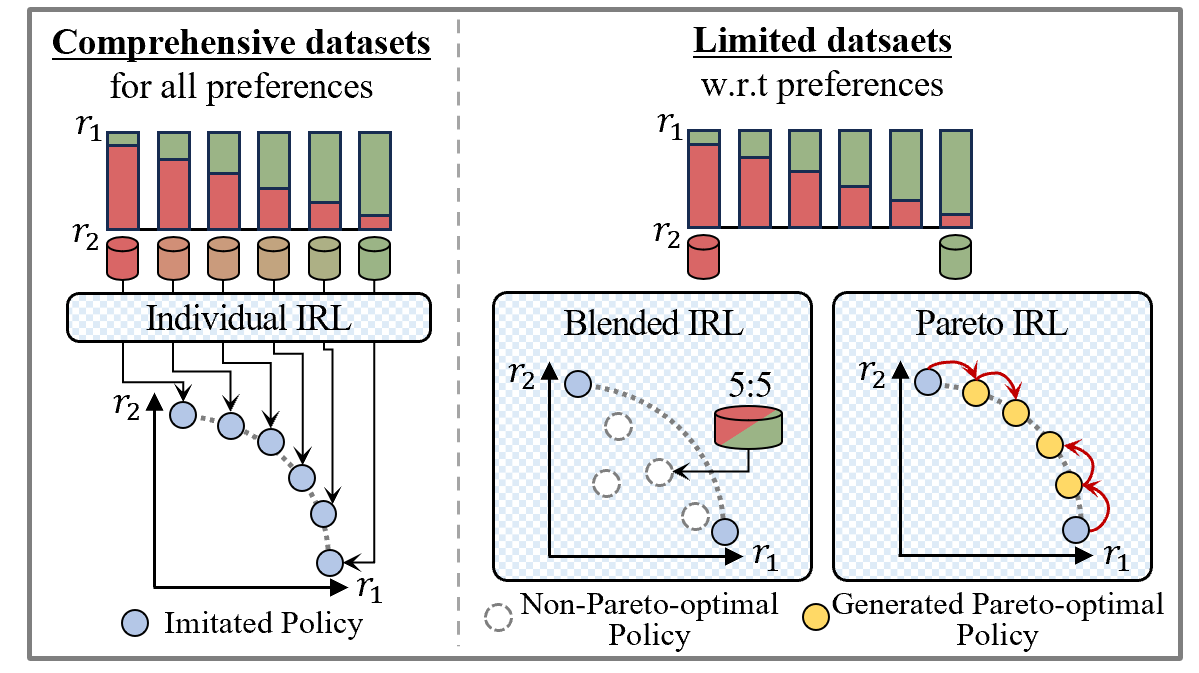}
    \caption{Data-driven Pareto policy set learning}
    \label{fig:intro}
\end{figure}
In decision-making scenarios, each expert might have her own preference on multiple, possibly conflicting objectives (multi-objectives).
Accordingly, learning Pareto-optimal policies in multi-objective environments has been considered essential and practical to provide users with a selection of diverse expert-level policies, which can cater their specific preferences (e.g.,~\cite{morl:pgmorl,morl:ppa}).
However, in the area of imitation learning, such multi-objective problem has not been fully explored due to the requirement for comprehensive expert datasets encompassing the full range of multi-objective preferences (e.g.,~\cite{moil:pdt}), which might be unattainable in real-world scenarios.
%


In the ideal scenario depicted on the left side of Figure~\ref{fig:intro}, having comprehensive expert datasets encompassing diverse multi-objective preferences enables the straightforward derivation of a Pareto policy set by reconstructing policies from each dataset. However, this is often not feasible in real-world situations where datasets might not represent all preferences. 
This common limitation is illustrated on the right side of Figure~\ref{fig:intro}. Here, one typically has access to only two distinct datasets, each reflecting different multi-objective preferences. In such limited dataset cases, a viable approach involves merging these datasets in varying proportions, followed by the application of imitation learning on each blended dataset. However, this approach often leads to a collection of non-Pareto-optimal policies, as demonstrated in Section~\ref{section:eval}.


In this paper, we address the challenges of multi-objective imitation learning in situations with strictly limited datasets, specifically focusing on Pareto policy set generation. 
Our goal is to derive optimal policies that conform with diverse multi-objective preferences, even in the face of limited datasets regarding these preferences. 
To do so, we investigate inverse reinforcement learning (IRL) and present a Pareto IRL ($\ourmodel$) framework in which a Pareto policy set corresponding to the best compromise solutions over multi-objectives can be induced. 
This framework is set in a similar context to conventional IRL where reward signals are not from the environment, but it is intended to obtain a dense set of Pareto policies rather than an individually imitated policy.

In {$\ourmodel$}, we exploit a recursive IRL structure to find a Pareto policy set progressively in a way that at each step, nearby policies can be derived between the policies of the previous step. Specifically, we adapt IRL using reward distance regularization; new policies are regularized based on reward distance estimates to be balanced well between  distinct datasets, while ensuring the regret bounds of each policy. This recursive IRL is instrumental in achieving the dense approximation of a Pareto policy set. 
%
Through distillation of the approximated Pareto policy set to a single policy network, we build a diffusion-based model, which is conditioned on multi-objective preferences. This distillation 
not only enhances the Pareto policy set but also integrates it into a single unified model, thereby facilitating the zero-shot adaptation to varying and unseen preferences.   

The contributions of our work are summarized as follows.
\begin{itemize}
    \item We introduce the $\ourmodel$ framework to address a novel challenge of imitation learning, Pareto policy set generation from strictly limited datasets.
    \item We devise a recursive IRL scheme with reward distance regularization to generate policies that extend beyond the datasets, and we provide a theoretical analysis on their regret bounds.
    \item We present a preference-conditioned diffusion model to further enhance the approximated policy set on unseen preferences. This allows users to dynamically adjust their multi-objective preferences at runtime. 
    \item We verify $\ourmodel$ with several multi-objective environments and autonomous driving scenarios,   
    demonstrating its superiority for Pareto policy set generation.
    \item  $\ourmodel$ is the first to tackle the data limitation problem for Pareto policy set generation within the IRL context. 
\end{itemize}

\section{Preliminaries and Problem Formulation}
\subsection{Background}
\noindent\textbf{Multi-Objective RL (MORL).}
A multi-objective Markov decision process (MOMDP) is formulated with multiple reward functions, each associated with an individual objective.
\begin{equation}
    (\StateSet,\ \ActionSet,\ \Dynamics,\ \mathbf{\RewardFunc},\ \Omega,\ f, \gamma)
\label{equ:momdp}
\end{equation}
Here, $s \in \StateSet$ is a state space, $a \in \ActionSet$ is an action space, $\Dynamics: \StateSet \times \ActionSet \times \StateSet \rightarrow [0,1]$ is a transition probability, and $\gamma \in [0, 1]$ is a discount factor.
MOMDP incorporates a vector of $m$ reward functions $\mathbf{\RewardFunc} = [r_1, ..., r_m]$ for $\RewardFunc: \StateSet \times \ActionSet \times \StateSet \rightarrow \Real$, a set of preference vectors $\Omega \subset \Real^m$, and a linear preference function $f(\mathbf{r}, \bm{\omega}) = \bm{\omega}^{\text{T}} \bm{r}$ where $\bm{\omega} \in \Omega$. 
The goal of MORL is to find a set of Pareto polices $\pi^* \in \ParetoPolicySet$ for an MOMDP environment, where $\pi^*$ maximizes scalarized returns, i.e.,  $\max _\pi \expectation_{a \sim \pi(\cdot|s)} [\sum_{t=1}^{H} \gamma^tf(\bm{r},\bm{\omega})]$.

\noindent\textbf{Inverse RL (IRL).} 
Given an expert dataset $\Dataset^*=\{\tau_i\}_{i=1}^{n}$, where each trajectory $\tau_i$ is represented as a sequence of state and action pairs $\{(s_t,a_t)\}_{t=1}^{T}$, IRL aims to infer the reward function of the expert policy, thus enabling the rationalization of its behaviors. 
Among many, the adversarial IRL algorithm (AIRL) casts IRL into a generative adversarial problem~\cite{irl:airl,irl:diffail} with such discriminator as
\begin{equation}
    \label{equ:disc}
    D(s,a,s') = \frac{\exp(\tilde{r}(s,a,s'))}{\exp(\tilde{r}(s,a,s')) + \pi(a|s)}
\end{equation}
where $s' \sim \Dynamics(s, a, \cdot)$ and $\tilde{r}$ is a inferring reward function.
The discriminator is trained to maximize the cross entropy between expert dataset and dataset induced by the policy via 
\begin{equation} \label{equ:disc:loss}
\begin{aligned}
    \max \ [ & \mathbb{E}_{(s,a)\sim \Dataset_{\pi}} [\log (1 - D(s,a,s'))] + \\
    & \mathbb{E}_{(s,a) \sim \Dataset^*} [ \log D(s,a,s')]]
\end{aligned}
\end{equation}
where $\Dataset_{\pi}$ is the dataset induced by learning policy $\pi$.
The generator of AIRL corresponds to $\pi$, which is trained to maximize the entropy-regularized reward function such as
\begin{equation} \label{equ:gen:loss}
\begin{aligned}
    \log(D(s,a,s')) - & \log(1 - D(s,a,s')) \\
    & = \tilde{r}(s,a,s') - \log\pi(a|s).
\end{aligned}
\end{equation}

\begin{figure}[t]
    \centering
        \includegraphics[width=.96\linewidth]{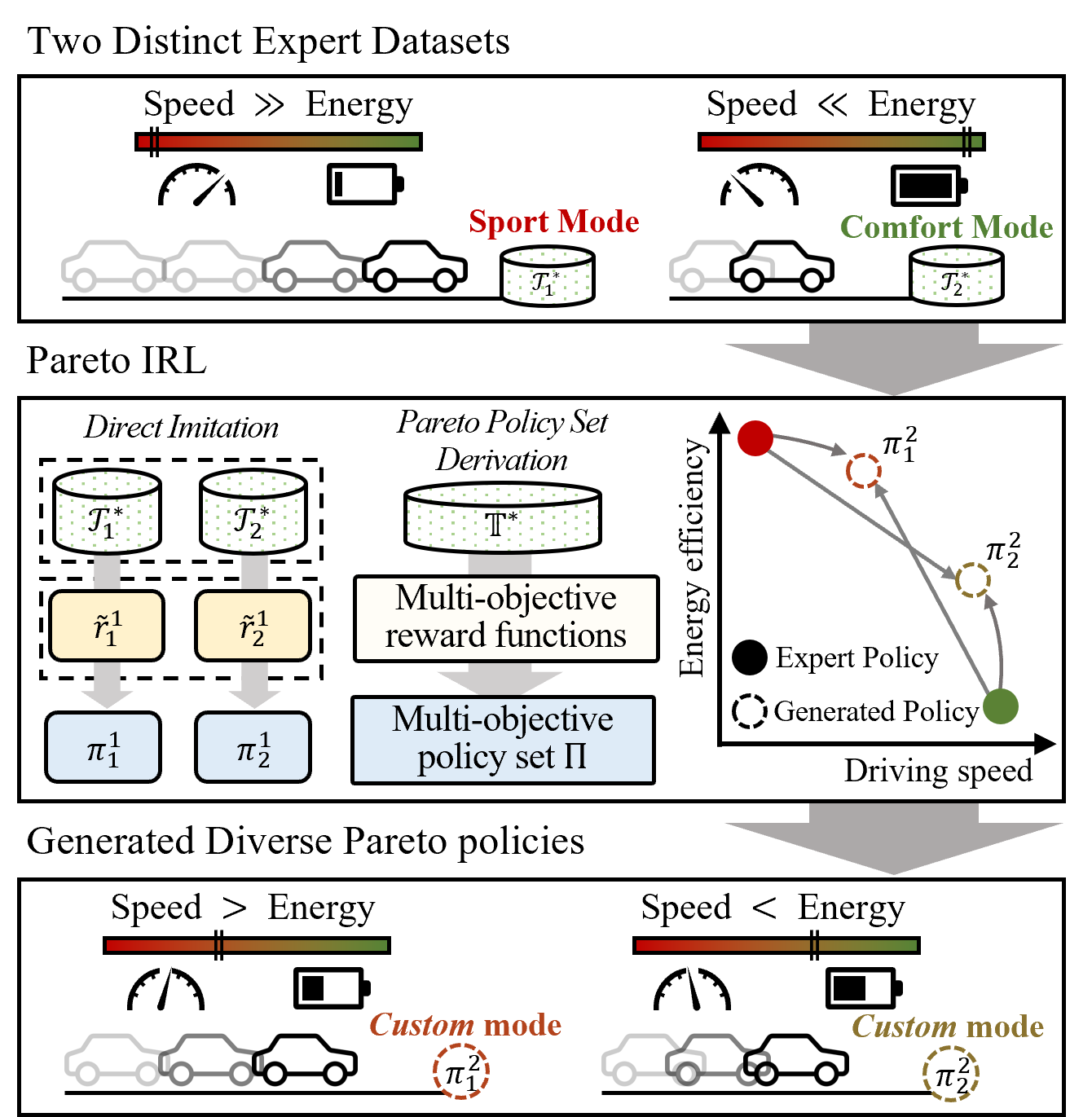}
    \caption{Concept of Pareto policy set generation: given two distinct expert datasets, each associated with a specific preference over multi-objectives (e.g., some expert prefers speed over energy efficiency, and vice versa), the Pareto IRL is to find a set of optimal compromise \textit{custom} policies, each of which can conform to a different preference.}
    \label{fig:problem}
\end{figure}

\subsection{Formulation of Pareto IRL}
We specify the Pareto IRL problem which derives a Pareto policy set from strictly limited datasets. 
%
Consider $M$ distinct expert datasets ${\Datasets}^* = \{\Dataset^*_{i}\}_{i=1}^{M}$ where each expert dataset $\Dataset^*_i$ is collected from the optimal policy on some reward function $r_\mo = \bm{\omega}_i^\text{T}\mathbf{r}$ with a fixed preference $\bm{\omega}_i \in \Omega$.
Furthermore, we assume that each dataset $\Dataset^*_i$  distinctly exhibits dominance on a particular reward function $r_i$.
In the following, we consider scenarios with two objectives ($M\!=\!2$), and later discuss the generalization for three or more objectives in Appendix A.4.

Given two distinct datasets, in the context of IRL, we refer to Pareto policy set derivation via IRL as Pareto IRL. 
Specifically, it aims at inferring a reward function $\tilde{r}$ and learning a policy $\pi$ for any preference $\bm{\omega}$ from the strictly limited datasets $\Datasets^*$. 
That is, when exploiting limited expert datasets in a multi-objective environment, we focus on establishing the Pareto policy set effectively upon unknown reward functions and preferences. 
%

%
Figure~\ref{fig:problem} briefly illustrates the concept of Pareto IRL, where a self-driving task involves different preferences on two objectives, possibly conflicting, such as driving speed and energy efficiency.
Consider two distinct expert datasets, where each expert has her own preference settings for the driving speed and energy efficiency objectives (e.g.,  $\Dataset^*_1$ and $\Dataset^*_2$ involve one dominant objective differently). 
While it is doable to restore a single useful policy individually from one given expert dataset, 
our work addresses the issue to generate a set of policies $\PolicySet$ which can cover a wider range of preferences beyond given datasets. 
The policies are capable of rendering optimal compromise returns, denoted by dotted circles in the figure, and they allow users to immediately select the optimal solution according to their preference and situation. 

For an MOMDP with a set of preference vectors $\bm{\omega} \in \Omega$, a vector of reward functions $\mathbf{r}$, and a preference function $f$ in~\eqref{equ:momdp}, Pareto policy set generation is 
to find a set of multi-objective policies such as
\begin{equation}
    \Pi = \{\pi\ |\ R_{f(\mathbf{r}, \bm{\omega})}(\pi) \geq R_{f(\mathbf{r}, \bm{\omega})}(\pi'), 
    \forall \pi', \exists\ \bm{\omega} \in \Omega \}
\end{equation}
for $M$ expert preference datasets $\{\Dataset^*_i\}_{i=1}^{M}$.
$R_{r}(\pi)$ represents returns induced by policy $\pi$ on reward function $r$. Neither a vector of true reward functions $\mathbf{r}$ is explicitly revealed, nor the rewards signals are annotated in the expert datasets, similar to conventional IRL scenarios. 

\color{black}
\begin{figure*}[t] 
    \centering
    \includegraphics[width=.94\textwidth]{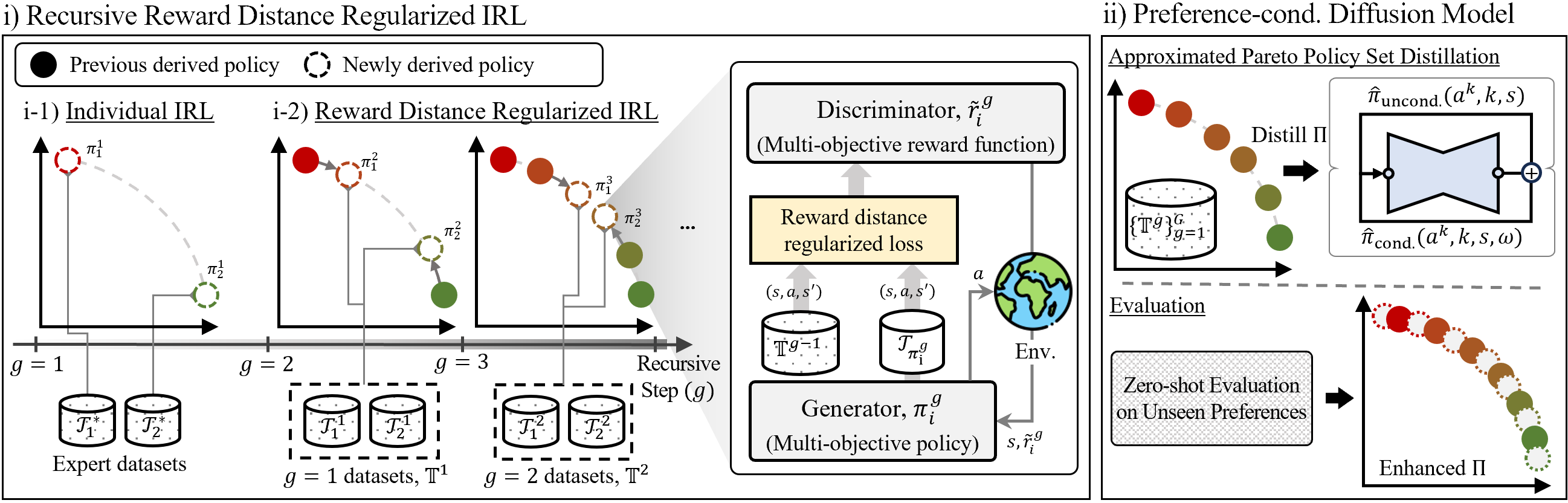}
    \caption{$\ourmodel$ framework: 
    In (i), policies in a Pareto set are recursively derived via reward distance regularized IRL. In (ii), the preference-conditioned diffusion model enhances the approximated Pareto policy set via distillation.
    }
    \label{fig:architecture}
\end{figure*}
\section{Our Framework}
To obtain a Pareto policy set from strictly limited datasets, we propose the $\ourmodel$ framework involving two learning phases: (\romannumeral 1) recursive reward distance regularized IRL,  (\romannumeral 2) distillation to a preference-conditioned model. 
%

In the first phase, our approach begins with direct imitation of the given expert datasets, and then recursively finds neighboring policies that lie on the Pareto front. 
Specifically, we employ the reward distance regularized IRL method that incorporates reward distance regularization into the discriminator's objective to learn a robust multi-objective reward function.
This regularized IRL ensures that the performance of the policy learned by the inferred multi-objective reward function remains within the bounds of the policy learned by the true reward function. 
By performing this iteratively, we achieve new useful policies that are not presented in the expert datasets, thus establishing a high-quality Pareto policy set.

%

In the second phase, we distill the Pareto policy set into a preference-conditioned diffusion model. The diffusion model encapsulates both preference-conditioned and unconditioned policies, each of which is associated with the preference-specific knowledge (within a task) and the task-specific knowledge (across all preferences), respectively.
Consequently, the unified policy model further enhances the Pareto policy set, rendering robust performance for arbitrary \textit{unseen} preferences in a zero-shot manner. It also allows for efficient resource utilization with a single policy network.


\subsection{Recursive Reward Distance Regularized IRL} \label{subsec:recursive_irl}
\noindent\textbf{Notation.} We use superscripts $g \in \{1,...,G\}$ to denote recursive step and subscripts $i \in \{1,2\}$ to denote $i$-th multi-objective policies derived at each recursive step $g$.
We consider two objectives cases in the following.

\noindent\textbf{Individual IRL.} As shown in the Figure~\ref{fig:architecture} (\romannumeral 1-1), the framework initiates with two separate IRL procedures, each dedicated to directly imitating one of the expert datasets.
For this, we adopt AIRL~\cite{irl:airl} which uses the objectives~\eqref{equ:disc:loss} and~\eqref{equ:gen:loss} to infer reward functions $\{\tilde{r}_{i}^{1}\}_{i=1}^{2}$ and policies $\{\pi_{i}^1\}_{i=1}^2$ from the individual expert dataset $\Dataset^*_i \in \Datasets^*$.
%

\noindent\textbf{Reward distance regularized IRL.}
Subsequently, as shown in the Figure~\ref{fig:architecture} (\romannumeral 1-2), at each recursive step $g \geq 2$, we derive new multi-objective reward functions $\{\tilde{r}_{i}^{g}\}_{i=1}^{2}$ and respective policies $\{\pi^g_i\}_{i=1}^2$ that render beyond the given datasets.
To do so, a straightforward approach might involve conducting IRL iteratively by blending the expert datasets at different ratios.
However, as illustrated in Figure~\ref{fig:ana:pareto}, the resulting policies tend to converge towards some weighted mean of datasets, rather than fully exploring non-dominant optimal actions beyond simple interpolation of given expert actions.
To address the problem, we present a reward distance regularized IRL on datasets $\Datasets^{g-1} \! = \! \{\Dataset_i^{g-1}\}_{i=1}^{2}$ collected from the policies derived at the previous step.
%
Given a reward distance metric $d(r, r')$, we compute the distance between the newly derived reward function $\tilde{r}^g_i$ and previously derived reward functions $\tilde{\mathbf{r}}^{g-1} = [\tilde{r}^{g-1}_1, \tilde{r}^{g-1}_2]$.
Further, we define target distances as a vector $\bm{\epsilon}^g_i = [\epsilon^g_{i,1}, \epsilon^g_{i,2}]$ to constrain each of the corresponding measured reward distances.
Then, we define a reward distance regularization term as
\begin{equation} 
    \label{equ:moirl:regul}
    \begin{split}
        I(\tilde{r}_{i}^{g},\tilde{\mathbf{r}}^{g-1}) =\sum\nolimits_{j=1}^2 \left( \epsilon^g_{i,j} - d(\tilde{r}_{i}^{g}, \tilde{r}_{j}^{g-1}) \right)^2
    \end{split}
\end{equation}
where the subscripts $i$ and $j$ denote the newly derived reward function and the previously derived one, respectively.
Finally, we incorporate~\eqref{equ:moirl:regul} into the discriminator objective~\eqref{equ:disc:loss} as
\begin{equation} \label{equ:moirl:loss}
\begin{aligned}
    \max \ & \mathbb{E}_{(s,a) \sim \mathcal{T}_{\pi_i^g}} [ \log ( 1 - D(s,a,s') )] + \\
    & \mathbb{E}_{(s,a) \sim \Datasets^{g-1}} [\log D(s,a,s')] - \beta \cdot I(\tilde{r}_i^g,{\tilde{\mathbf{r}}^{g-1}})
\end{aligned}
\end{equation}
where $\beta$ is a hyperparameter.
This allows the discriminator to optimize a multi-objective reward function for a specific target distance across datasets.
The reward distance regularized IRL procedure is performed twice with different target distances, to derive policies adjacent to each of the previously derived policies.
Furthermore, we fork the new regularized IRL procedure with the previously one (that is adjacent) to enhance the efficiency and robustness in learning.

The choice of the target distance is crucial,  
as the regret of a multi-objective policy is bounded under the reward distance~\eqref{equ:regbound_3}.
Thus, we set the sum of the target distances as small as possible.
As the reward distance metrics satisfy the triangle inequality $d(\tilde{r}_{1}^{g-1},\tilde{r}_{2}^{g-1}) \leq d(\tilde{r}_i^g,\tilde{r}_{1}^{g-1}) + d(\tilde{r}_i^g,\tilde{r}_{2}^{g-1})$, we limit the sum of target distances to 
\begin{equation}
    \label{equ:moirl:trdsum}
     \hat{\epsilon}^g_i = \sum\nolimits_{j=1}^2 \epsilon^g_{i,j} = d(\tilde{r}_{1}^{g-1},\tilde{r}_{2}^{g-1}).
\end{equation}
In practice, we assign a small constant value for one of the target distances $\epsilon^g_{i,i}$, while the other is  determined as $\hat{\epsilon}^g_i - \epsilon^g_{i,i}$. 
By doing so, we are able to effectively derive a new policy that is adjacent to one of the previous policies.

Any reward distance metric that guarantees the regret bounds of policy can be used for $\ourmodel$.
In our implementation, we adopt EPIC, also known as equivalent policy invariant comparison pseudometric~\cite{rw:epic}, which quantitatively measures the distance between two reward functions.
The learning procedure of recursive reward distance regularized IRL is summarized in Algorithm~\ref{algo:PaIRL}.
In Appendix A.4, we discuss the generalization of reward distance regularization for more than two objectives ($M \geq 3$).

\begin{algorithm}[t]
    \textbf{Input:} Expert preference datasets $\Datasets^{*}=\{\Dataset^{*}_1,\Dataset^{*}_{2}\}, \PolicySet = \emptyset$
    
    \begin{algorithmic}[1]
    \STATE {/* $1$-st step: individual IRL procedures */}
    \FOR{$i \leftarrow 1,...,2$}
        \STATE{Obtain $\tilde{r}_i^1$ and $\pi_i^1$ using $\Dataset^{*}_{i}$ through \eqref{equ:disc:loss} and \eqref{equ:gen:loss}}
        \STATE{Collect dataset $\Dataset^{1}_{i}$ by executing $\pi_{i}^{1}$}
        \STATE{Execute $\Datasets^1 \leftarrow \Datasets^1 \cup \Dataset_{i}^{1}$ and $\PolicySet \leftarrow \PolicySet \cup \{\pi_{i}^{1}\}$}       
    \ENDFOR    
    \STATE {/* $g$-th step: regularized IRL procedure */}
    \FOR{$g \leftarrow 2,...,G$}
        \FOR{$i \leftarrow 1,...,2$}
            \STATE{Initialize $\tilde{r}_i^g \leftarrow \tilde{r}_{i}^{g-1}$ and $\pi_i^g \leftarrow \pi_{i}^{g-1}$}
            \STATE{Set $\bm{\epsilon}^g_i = [\epsilon^g_{i,1}, \epsilon^g_{i,2}]$ based on \eqref{equ:moirl:trdsum}}
            \WHILE{not converge}
                \STATE{Update $\tilde{r}_i^g$ using $\bm{\epsilon}^g_i$ and regularized loss in \eqref{equ:moirl:loss}}
                \STATE{Update $\pi_i^g$ w.r.t. $\tilde{r}_i^g$ using \eqref{equ:gen:loss}}
            \ENDWHILE
            \STATE{Collect dataset $\Dataset_i^g$ by executing $\pi_i^g$}
            \STATE{Execute $\Datasets^g \leftarrow \Datasets^g \cup \Dataset_{i}^{g}$ and $\PolicySet \leftarrow \PolicySet \cup {\pi_i^g}$}
        \ENDFOR
    \ENDFOR
    \STATE{\textbf{return} $\PolicySet$}
    \end{algorithmic}
    \caption{Recursive reward distance regularized IRL} 
    \label{algo:PaIRL}
\end{algorithm}
\subsection{Regret Bounds of Reward Distance Regularized Policy} \label{sec:proof}  
We provide an analysis of the regret bounds of a reward distance regularized policy.
Let $\tilde{r}$ be our learned reward function and $\pi^*_r$ be the optimal policy with respect to reward function $r$. Suppose that there exists a (ground truth) multi-objective reward function $r_{\mo} = \bm{\omega}^T \mathbf{r}$ with preference $\bm{\omega} = [\omega_1,\omega_2]$.
With the linearity of $r_{\mo}$, we obtain
\begin{equation} \label{equ:regbound_1}
\begin{aligned}
    R_{r_{\mo}}(\pi_{r_{\mo}}^*) - R_{r_{\mo}}(\pi_{\tilde{r}}^*)
    & = \sum\nolimits_{i=1}^2 {\omega}_i (R_{\tilde{r}_i}(\pi_{r_{\mo}}^*) - R_{\tilde{r}_i}(\pi_{\tilde{r}}^*)) \\
    & \leq  \sum\nolimits_{i=1}^2 {\omega}_i (R_{\tilde{r}_i}(\pi_{\tilde{r}_{i}}^*) - R_{\tilde{r}_i}(\pi_{\tilde{r}}^*)).
\end{aligned}   
\end{equation}
Let $\mathcal{D}$ be the distribution over transitions $S \times A \times S$ used to compute EPIC distance $d_\epsilon$, and $\mathcal{D}_{\pi, t}$ be the distribution over transitions on timestep $t$ induced by policy $\pi$.
Using Theorem A.16 in~\cite{rw:epic}, we derive that for $\alpha \geq 2$, \eqref{equ:regbound_1} is bounded by the sum of individual regret bounds, i.e.,
\begin{equation} \label{equ:regbound_2}
\begin{aligned}
    \sum\nolimits_{i=1}^2 & {\omega}_i (R_{\tilde{r}_i} (\pi_{\tilde{r}_{i}}^*) - R_{\tilde{r}_i}(\pi_{\tilde{r}}^*)) \\
    & \leq \sum\nolimits_{i=1}^2 {16 {\omega}_i \lVert \tilde{r}_i \rVert_2\left(K d_\epsilon (\tilde{r}, \tilde{r}_i) + L\Delta_\alpha(\tilde{r}) \right)}
\end{aligned}   
\end{equation}
where $L$ is a constant, $K = \alpha / (1-\gamma)$, $\Delta_\alpha(\tilde{r}) = \sum_{t=0}^T\gamma^t W_\alpha(\mathcal{D}_{\pi^*_{\tilde{r}}, t}, \mathcal{D})$, and $W_\alpha$ is the relaxed Wasserstein distance~\cite{wasser}.
Consequently, we obtain
\begin{equation} \label{equ:regbound_3}
\begin{aligned}
    R&_{r_{\text{mo}}}  (\pi_{r_{\text{mo}}}^*)- R_{r_{\text{mo}}}(\pi_{\tilde{r}}^*) \\
    & \leq 32K\|r_{\mo}\|_2 \left(\sum_{i=1}^2 \left[ {\omega}_i d_\epsilon (\tilde{r}, \tilde{r}_i) \right] + \frac{L}{K}\Delta_\alpha(\tilde{r}) \right).
\end{aligned}   
\end{equation}

As such, the regret bounds of our learned policy $\pi$ on reward function $\tilde{r}$ are represented by the regularization term based on EPIC along with the differences between the respective distributions of transitions generated by $\pi^*_{\tilde{r}}$ and the distribution $\mathcal{D}$ used to compute EPIC distance. 
This ensures that the regret bounds of $\pi$ can be directly optimized by using~\eqref{equ:moirl:loss}.
In our implementation, instead of directly multiplying the preference $\bm{\omega}$ to the loss function, we reformulate the preference into the target distance to balance the distance better.
The details with proof can be found in Appendix A.2.
\begin{table*}[t]
    \centering 
    \small
    \begin{adjustbox}{width=1.\textwidth}
    \begin{tabular}{l|c|c|cccccc|cc} 
    \toprule
    \textbf{Environment} & \textbf{Metric} & \textbf{Oracle} & \textbf{DiffBC}  & \textbf{BeT} & \textbf{GAIL} & \textbf{AIRL} & \textbf{IQ-Learn} & \textbf{DiffAIL} & \textbf{$\ourmodel$} & \textbf{$\ourmodel$+DU} \\
    \midrule
    \multirow{3}{*}{\textbf{MO-Car}} 
    & HV & $6.06$ & $3.95 \pm 0.26$ & $4.22 \pm 0.06$ & $4.95 \pm 0.16$ & $5.01 \pm 0.13$ 
    & $3.47 \pm 1.20$ & $1.78 \pm 0.08$ & ${5.37} \pm {0.08}$ & $\textbf{5.89} \pm \textbf{0.05}$ \\
    & SP & $0.01$ & $1.49 \pm 0.19$ & $1.07 \pm 0.23$ & $0.89 \pm 0.43$ & $0.67 \pm 0.10$ 
    & $3.43 \pm 2.84$ & $6.95 \pm 0.12$ & ${0.26} \pm {0.05}$ & $\textbf{0.01} \pm \textbf{0.01}$ \\
    & CR & $1.00$ & $0.81 \pm 0.01$ & $0.82 \pm 0.04$ & $0.60 \pm 0.09$ & $0.83 \pm 0.02$ 
    & $0.69 \pm 0.03$ & $0.69 \pm 0.07$ & $\textbf{0.97} \pm \textbf{0.01}$ & $\textbf{0.97} \pm \textbf{0.01}$ \\ 
    \midrule
    
    \multirow{3}{*}{\textbf{MO-Swimmer}} 
    & HV & $5.60$ & $3.56 \pm 0.38$ & $3.86 \pm 0.20$ & $3.61 \pm 0.13$ & $3.83 \pm 0.38$ 
    & $2.97 \pm 0.35$ & $3.52 \pm 1.03$ & ${4.56} \pm {0.04}$ & $\textbf{4.96} \pm \textbf{0.06}$ \\
    & SP & $0.03$ & $1.12 \pm 0.24$ & $0.74 \pm 0.34$ & $1.80 \pm 0.34$ & $2.34 \pm 1.34$ 
    & $2.54 \pm 1.31$ & $4.00 \pm 4.62$ & ${0.17} \pm {0.02}$ & $\textbf{0.01} \pm \textbf{0.01}$ \\
    & CR & $1.00$ & $0.75 \pm 0.01$ & $0.82 \pm 0.05$ & $0.70 \pm 0.09$ & $0.70 \pm 0.01$ 
    & $0.78 \pm 0.03$ & $0.77 \pm 0.07$ & $\textbf{0.98} \pm \textbf{0.01}$  & $0.96 \pm 0.01$ \\ 
    \midrule
    
    \multirow{3}{*}{\textbf{MO-Cheetah}}
    & HV & $5.09$ & $3.97 \pm 0.27$ & $2.22 \pm 0.29$ & $3.75 \pm 0.28$ & $4.25 \pm 0.06$ 
    & $2.82 \pm 0.48$ & $3.86 \pm 0.46$ & ${4.97} \pm {0.13}$ & $\textbf{5.27} \pm \textbf{0.07}$ \\
    & SP & $0.01$ & $0.59 \pm 0.23$ & $1.26 \pm 0.95$ & $1.56 \pm 0.50$ & $0.62 \pm 0.13$ 
    & $6.68 \pm 3.51$ & $1.40 \pm 0.83$ & ${0.11} \pm {0.01}$ & $\textbf{0.01} \pm \textbf{0.00}$ \\
    & CR & $1.00$ & $0.72 \pm 0.02$ & $0.68 \pm 0.04$ & $0.72 \pm 0.03$ & $0.77 \pm 0.03$ 
    & $0.55 \pm 0.04$ & $0.38 \pm 0.10$ & ${0.92} \pm {0.02}$ & $\textbf{0.93} \pm \textbf{0.02}$ \\ 
    \midrule
    
    \multirow{3}{*}{\textbf{MO-Ant}} 
    & HV & $6.30$ & $3.73 \pm 0.23$ & $1.90 \pm 0.09$ & $3.63 \pm 0.22$ & $4.08 \pm 0.29$ 
    & $2.34 \pm 0.28$ & $1.40 \pm 0.01$ & ${4.71} \pm {0.10}$ & $\textbf{4.99} \pm \textbf{0.12}$ \\
    & SP & $0.01$ & $0.33 \pm 0.13$ & $0.61 \pm 0.79$ & $0.67 \pm 0.13$ & $0.37 \pm 0.16$ 
    & $0.58 \pm 0.24$ & $15.09 \pm 0.01$ & ${0.06} \pm {0.01}$ & $\textbf{0.01} \pm \textbf{0.00}$ \\ 
    & CR & $1.00$ & $0.87 \pm 0.00$ & $0.73 \pm 0.03$ & $0.86 \pm 0.02$ & $0.91 \pm 0.01$ 
    & $0.69 \pm 0.02$ & $0.19 \pm 0.03$ & $\textbf{0.99} \pm \textbf{0.01}$ & $\textbf{0.99} \pm \textbf{0.01}$ \\ 
    \midrule
    
    \multirow{3}{*}{\textbf{MO-AntXY}} 
    & HV & $6.78$ & $3.76 \pm 0.04$ & $1.54 \pm 0.12$ & $4.18 \pm 0.16$ & $4.49 \pm 0.19$ 
    & $2.02 \pm 0.06$ & $3.66 \pm 0.63$ & ${5.37} \pm {0.09}$ & $\textbf{5.61} \pm \textbf{0.10}$ \\
    & SP & $0.03$ & $0.54 \pm 0.11$ & $12.14 \pm 0.88$ & $0.50 \pm 0.02$ & $0.39 \pm 0.14$ 
    & $0.70 \pm 0.13$ & $1.06 \pm 0.24$ & ${0.07} \pm {0.01}$ & $\textbf{0.01} \pm \textbf{0.00}$ \\ 
    & CR & $1.00$ & $0.80 \pm 0.02$ & $0.42 \pm 0.06$ & $0.76 \pm 0.03$ & $0.77 \pm 0.04$ 
    & $0.68 \pm 0.02$ & $0.52 \pm 0.16$  & ${0.95} \pm {0.02}$ & $\textbf{0.98} \pm \textbf{0.00}$ \\ 
    \midrule

    \multirow{2}{*}{\shortstack[l]{\textbf{MO-Car}*\\(3-obj)}}
    & HV & $2.89$ & $0.86 \pm 0.12$ & $1.54 \pm 0.00$ & $1.63 \pm 0.02$ & $1.88 \pm 0.09$ 
    & $0.80 \pm 0.02$ & $1.16 \pm 0.07$ & ${1.87} \pm {0.04}$ & $\textbf{2.79} \pm \textbf{0.02}$ \\
    & SP & $0.01$ & $0.46 \pm 0.12$ & $0.06 \pm 0.00$ & $0.08 \pm 0.01$ & $0.90 \pm 0.74$ 
    & $0.53 \pm 0.03$ & $0.19 \pm 0.02$ & $0.08 \pm 0.00$ & $\textbf{0.01} \pm \textbf{0.00}$ \\ 
    \bottomrule
    \end{tabular}
    \end{adjustbox}
\caption{Performance of Pareto set generation: regarding evaluation metrics, the higher HV, the higher the performance; the lower SP, the higher the performance; the higher CR, the higher the performance. For the baselines and $\ourmodel$, we evaluate with 3 random seeds.
}  
\label{tbl:main1}
\end{table*}
%

\subsection{Preference-conditioned Diffusion Model}
To further enhance the Pareto policy set $\PolicySet$ obtained in the previous section, we leverage diffusion models~\cite{du:ddpm,du:cfdg}, interpolating and extrapolating policies via distillation.
We first systematically annotate $\PolicySet$ with preferences $\bm{\omega} \in \Omega$ in an ascending order. We then train a diffusion-based policy model, which is conditioned on these preferences; i.e.,
\begin{equation}
    {\pi}_{\text{u}}(a|s,\omega) = \mathcal{N}(a^K;0,I) \prod\nolimits_{k=1}^{K} \hat{\pi}_{\text{u}}(a^{k-1}|a^{k},k,s,\omega)
\end{equation}
where superscripts $k \sim [1,K]$ denote the denoising timestep, $a^0 (=a)$ is the original action, and $a^{k-1}$ is a marginally denoised version of $a^k$.
The diffusion model is designed to predict the noise from a noisy input $a^k=\sqrt{\bar{\alpha}^k}a + \sqrt{1-\bar{\alpha}^k}\eta$ with a variance schedule parameter $\bar{\alpha}^k$ and $\eta \sim \mathcal{N}(0,I)$, i.e.
\begin{equation}
    \min \expectation_{(s,a) \sim \{\Datasets^{g}\}_{g=1}^{G}, k\sim[1,K]}[||\hat{\pi}_{\text{u}}(a^k,k,s,\omega) - \eta||_2^2]
\end{equation}
where $\{\Datasets^{g}\}_{g=1}^{G}$ is the entire datasets collected by the policies in $\PolicySet$.
Furthermore, we represent the model as a combination of preference-conditioned and unconditioned policies,
\begin{equation}
\begin{aligned}
    \hat{\pi}_{\text{u}} & (a^k,k,s,\omega) := \\
    & (1-\delta) \hat{\pi}_{\text{cond.}}(a^{k},k,s,\omega) + \delta \hat{\pi}_{\text{uncond.}}(a^k,k,s)
\end{aligned}
\label{equ:prefcondpolicy}
\end{equation}
where $\delta$ is a guidance weight.
The unconditioned policy encompasses general knowledge across the approximated Pareto policies, while the conditioned one guides the action according to the specific preference.

During sampling, the policy starts from a random noise and iteratively denoises it to obtain the executable action,
\begin{equation}
    a^{k-1} = \frac{1}{\sqrt{\alpha^k}}\left(a^k - \frac{1-\alpha^k}{\sqrt{1-\bar{\alpha}^k}}\hat{\pi}_{\text{u}}(a^k,k,s,\omega)\right) + \sigma^k\eta
\end{equation}
where $\alpha^k$ and $\sigma^k$ are variance schedule parameters.
The diffusion model $\hat{\pi}_{\text{u}}$ allows for efficient resource utilization at runtime with a single policy network, and is capable of rendering robust performance for \textit{unseen} preferences in a zero-shot manner.
Consequently, it enhances the Pareto policy set in terms of Pareto front density 
, as illustrated in Figure~\ref{fig:architecture} (\romannumeral 2).
%

\section{Evaluation} \label{section:eval}
\subsection{Experiment Settings}
\noindent\textbf{Environments.} For evaluation, we use (\romannumeral 1) a multi-objective car environment (MO-Car), and several multi-objective variants of MuJoCo environments used in the MORL literature~\cite{morl:pgmorl,morl:ppa} including (\romannumeral 2) MO-Swimmer, (\romannumeral 3) MO-Cheetah, (\romannumeral 4) MO-Ant, and (\romannumeral 5) MO-AntXY. 
For tradeoff objectives, the forward speed and the energy efficiency are used in (\romannumeral 2)-(\romannumeral 4), and the x-axis speed and the y-axis speed are used in (\romannumeral 5).
In these environments, similar to conventional IRL settings, reward signals are not used for training; they are used solely for evaluation.

\noindent\textbf{Baselines.} For comparison, we implement following imitation learning algorithms: 
1) \textbf{DiffBC}~\cite{il:dubc}, an imitation learning method that uses a diffusion model for the policy,
2) \textbf{BeT}~\cite{il:bet}, an imitation learning method that integrates action discretization into the transformer architecture, 
3) \textbf{GAIL}~\cite{il:gail}, an imitation learning method that imitates expert dataset via the generative adversarial framework,
4) \textbf{AIRL}~\cite{irl:airl}, an IRL method that induces both the reward function and policy,
5) \textbf{IQ-Learn}~\cite{irl:iqlearn}, an IRL method that learns a q-function to represent both the reward function and policy,
6) \textbf{DiffAIL}~\cite{irl:diffail}, an IRL method that incorporates the diffusion loss to the discriminator's objective.
To cover a wide range of different preferences, these baselines are conducted multiple times on differently augmented datasets, where each is a mixed dataset that integrates given datasets in the same ratio to a specific preference. 
We also include MORL~\cite{morl:pgmorl} that 
uses explicit rewards from the environment, unlike IRL settings. It serves as \textbf{Oracle} (the upper bound of performance) in the comparison.

%

\noindent\textbf{Metrics.} For evaluation, we use several multi-objective metrics~\cite{morl:moq,morl:pgmorl}.
\begin{itemize}
    \item Hypervolume metric (HV) represents the quality in the cumulative returns of a Pareto policy set. Let $\mathcal{F}$ be the Pareto frontier obtained from an approximated Pareto policy set for $m$ objectives and $\bm{R}_0 \in \mathbb{R}^m$ be a reference point for each objective. Then, $\text{HV} = \int \mathds{1}_{H(\mathcal{F})}(z)dz$ where $H(\mathcal{F})=\{z \in \mathbb{R}^m \mid \exists \bm{R} \in \mathcal{F} : \bm{R_0} \leq z \leq \bm{R}\}$.
    \item Sparsity metric (SP) represents the density in the average return distance of the Pareto frontier. Let $\mathcal{F}_j(i)$ be the $i$-th value in a sorted list for the $j$-th objective. 
    Then, $\text{SP} = \frac{1}{|\mathcal{F}| - 1} \sum_{j=1}^{m} \sum_{i=1}^{|\mathcal{F}|} (\mathcal{F}_j(i) - \mathcal{F}_j(i+1))^2$.
\end{itemize}
We also use a new metric designed for Pareto IRL. 
\begin{itemize}
    \item Coherence metric (CR) represents the monotonic improvement property of approximated policy set $\PolicySet = \{\pi_i\}_{i\leq N}$ generated by two expert datasets. Let policy list $(\pi_1, ..., \pi_N)$ be sorted in ascending order by the expected return of the policies with respect to reward function $r_1$, Then, $\text{CR} = \frac{2}{N (N-1)} \sum_{i=1}^{N} \sum_{j=i}^{N} \mathds{1}_{h(i,j)}$ where $h(i,j) \!\! = \!\! R_{r_1}(\pi_i) \!\! \leq \!\! R_{r_1}(\pi_j) \ \text{and} \ R_{r_2}(\pi_i) \!\! \geq \!\! R_{r_2}(\pi_j)$. 
\end{itemize}
For HV and CR, higher is better, but for SP, lower is better. 

\subsection{Performance of Pareto Set Generation}
Table~\ref{tbl:main1} compares the performance in the evaluation metrics (HV, SP, CR) achieved by our framework ($\ourmodel$, $\ourmodel$+DU) and other baselines (DiffBC, BeT, GAIL, AIRL, IQ-Learn, DiffAIL).
$\ourmodel$ is trained with the recursive reward distance regularized IRL, and $\ourmodel$+DU is enhanced through the distillation. 
%
For the baselines, the size of a preference set (with different weights) is given equally to the number of policies derived via $\ourmodel$.
%
When calculating HV and SP, we exclude the out-of-order policies obtained from an algorithm with respect to preferences.
As shown, our $\ourmodel$ and $\ourmodel$+DU consistently yield the best performance for all environments, outperforming the most competitive baseline AIRL by $15.6\% \sim 23.7\%$ higher HV, $80.4\% \sim 98.2\%$ lower SP, and $21.7\% \sim 22.2\%$ higher CR on average.
Furthermore, we observe an average HV gap of $9.8\%$ between $\ourmodel$+DU and Oracle that uses the ground truth reward signals.
This gap is expected, as existing IRL algorithms are also known to experience a performance drop compared to RL algorithms that directly use reward signals~\cite{irl:airl}.
For the baselines, such performance degradation is more significant, showing an average drop of $26.9\%$ in HV between AIRL and Oracle.
$\ourmodel$+DU improves the performance in HV over $\ourmodel$ by $7.0\%$ on average, showing the distilled diffusion model achieves robustness on unseen preferences.

To verify the performance of $\ourmodel$ for three objectives case, we extend MO-Car to MO-Car* where the tradeoff objectives are the velocities on three different directions.
Our $\ourmodel$ and $\ourmodel$+DU show superiority in terms of HV, but sometimes show slightly lower performance in SP. I
t is because the baselines tend to shrinks towards the low-performance region, thus yielding lower SP.
As CR is defined only for two objectives cases, CR for MO-Car* is not reported. 
The generalization of reward distance regularization for three or more objectives is discussed in Appendix A.4.

In this experiment, the baselines exhibit relatively low performance due to their primarily concentration on imitating the datasets, posing a challenge in generating policies that go beyond the limited datasets.
Specifically, as DiffBC and BeT are designed to handle datasets with multiple modalities, they do not necessarily lead to the generation of novel actions.
Meanwhile, the IRL baselines demonstrate relatively better performance, as they involve environment interactions.
However, imitating from a merged dataset with specific ratio tends to converge towards the mean of existing actions, thus  leading to sub-optimal performance.

\subsection{Analysis}
\noindent\textbf{Pareto Visualization.} Figure~\ref{fig:ana:pareto} depicts the Pareto policy set by our $\ourmodel$ and $\ourmodel$+DU as well as the baselines (DiffBC, AIRL) for MO-AntXY. 
The baselines often produce the non-optimal solutions, specified by the dots in the low-performance region. 
%
$\ourmodel$+DU produces the most densely spread policies, which lie on the high-performance region.
%
%

\noindent\textbf{Learning Efficiency.} Figure~\ref{fig:ana:curve} depicts the learning curves in HV for MO-AntXY over recursive steps.
For baselines, we intentionally set the number of policies of the baselines equal to the number of policies derived through $\ourmodel$ for each step.
The curves show the superiority of our recursive reward distance regularized IRL in generating the higher quality (HV) Pareto frontier.
Furthermore, the recursive learning scheme significantly reduces the training time, requiring only $13\% \sim 25\%$ of training timesteps compared to the IRL baselines.
This is because $\ourmodel$ explores adjacent policies progressively by making explicit use of the previously derived policies to fork another regularized IRL procedure.
\begin{figure}[h]
    \centering
    \subfigure[Pareto Visualization]{
        \centering
        \includegraphics[width=0.47\linewidth]{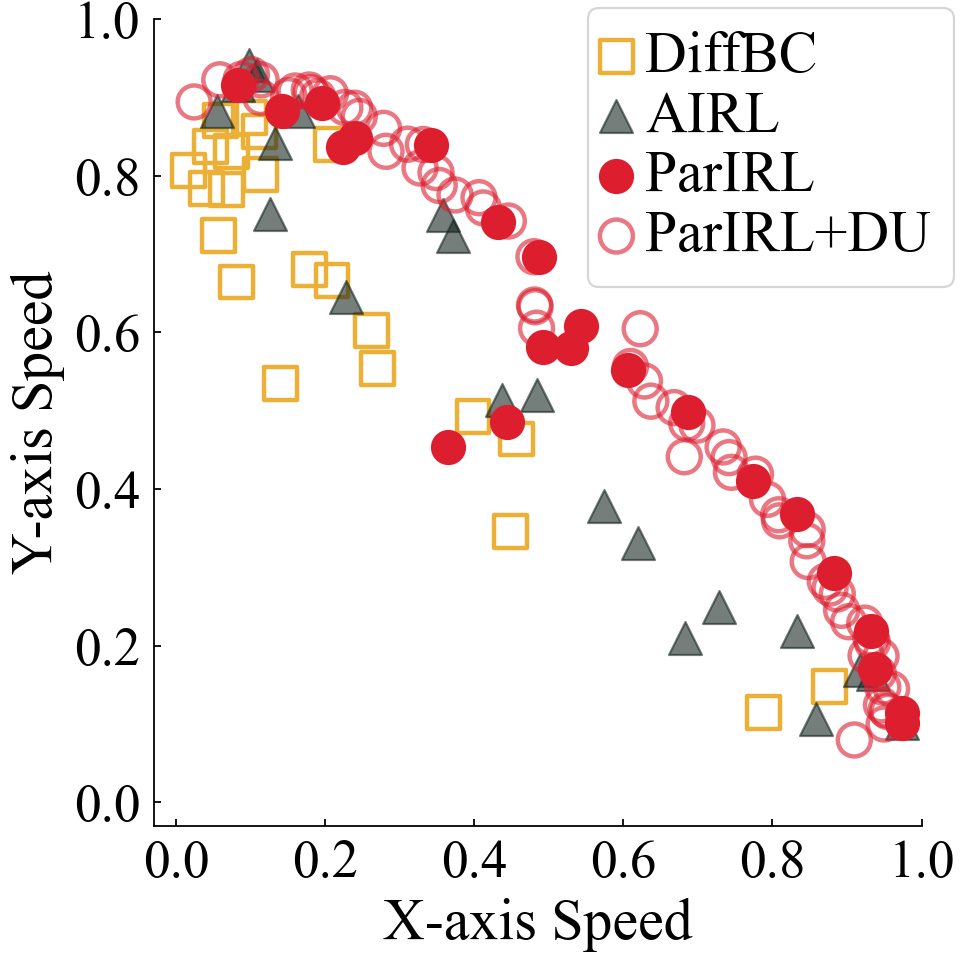}
        \label{fig:ana:pareto}
    }
    \subfigure[Learning Curve]{
        \centering
        \includegraphics[width=0.47\linewidth]{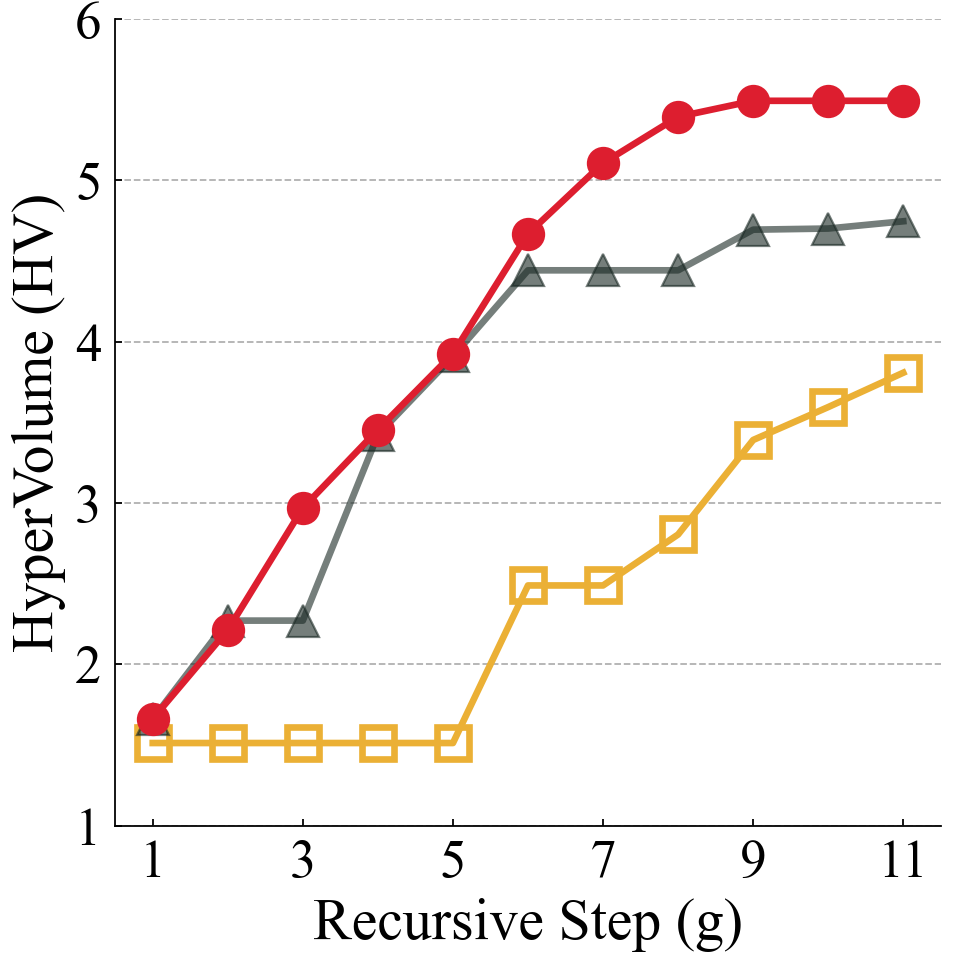}
        \label{fig:ana:curve}
    }
    \caption{Pareto policy set $\PolicySet$ visualization and learning curve}
    \label{fig:ana}
\end{figure}
%

\noindent\textbf{Ablation Studies.} Table~\ref{tbl:abl:1} provides an ablation study of $\ourmodel$ with respect to the reward distance metrics and recursive learning scheme.
For this, we implement $\ourmodel$/MSE and $\ourmodel$/PSD, which use mean squared error (MSE) and Pearson distance (PSD) for reward distance measures, respectively; we also implement $\ourmodel$/RC which represents $\ourmodel$ without recursive learning scheme.
While MSE tends to compute the exact reward distance and PSD estimates the linear correlation between rewards, EPIC accounts for the reward function distance that is invariant to potential shaping~\cite{rw:epic}, thus making $\ourmodel$ optimize the regret bounds of a policy learned on an inferred reward function.
Moreover, $\ourmodel$/RC degrades compared to $\ourmodel$, clarifying the benefit of our recursive learning scheme.
\begin{table}[h]
    \centering 
    \small
    \begin{adjustbox}{width=\linewidth}
    \begin{tabular}{c|c|cccc}
        \toprule
        \textbf{Env.} & \textbf{Met.} & \textbf{$\ourmodel$/MSE} & \textbf{$\ourmodel$/PSD} & \textbf{$\ourmodel$/RC} & \textbf{$\ourmodel$} \\
        \midrule
        \multirowcell{2}{1}
        & HV & $3.37 \pm 0.08$ & $4.02 \pm 0.23$ & $4.17 \pm 0.14$ & $\textbf{4.97} \pm \textbf{0.13}$ \\ 
        & SP & $1.93 \pm 0.31$ & $0.55 \pm 0.14$ & $0.47 \pm 0.18$ & $\textbf{0.11} \pm \textbf{0.01}$ \\
        \midrule
        \multirowcell{2}{2} 
        & HV & $2.28 \pm 0.07$ & $4.96 \pm 0.11$ & $4.10 \pm 0.23$ & $\textbf{5.37} \pm \textbf{0.09}$ \\
        & SP & $2.25 \pm 0.23$ & $0.29 \pm 0.04$ & $0.58 \pm 0.23$ & $\textbf{0.07} \pm \textbf{0.01}$ \\
        \bottomrule
    \end{tabular}
    \end{adjustbox}
    \caption{Performance w.r.t reward distance metrics: 1 and 2 represents MO-Cheetah and MO-AntXY, respectively.}
    \label{tbl:abl:1}
\end{table}
Table~\ref{tbl:abl:2} shows the effect of our preference-conditioned diffusion model.
$\ourmodel$+BC denotes distillation using the naive BC algorithm. We test $\ourmodel$+DU with varying guidance weights $\delta$ in~\eqref{equ:prefcondpolicy}, ranging from $0.0$ to $1.8$.
The results indicate that $\ourmodel$+DU improves by $6.42\%$ at average over $\ourmodel$+BC. Employing both unconditioned and conditioned policies ($\delta > 0$) contributes to improved performance.
\begin{table}[h]
    \centering 
    \small
    \begin{adjustbox}{width=\linewidth}
    \begin{tabular}{c|c|c|ccc}
        \toprule
        \textbf{Env.} & \textbf{Met.} & \textbf{$\ourmodel$+BC} & \textbf{$\delta=0.0$} & \textbf{$\delta=1.2$} & \textbf{$\delta=1.8$} \\
        \midrule
        \multirowcell{2}{1}
        & HV & $4.52 \pm 0.46$ & $4.86 \pm 0.09$ & $\textbf{4.96} \pm \textbf{0.06}$ & $4.94 \pm 0.05$ \\ 
        & SP & $0.02 \pm 0.00$ & $0.01 \pm 0.00$ & $0.01 \pm 0.00$ & $0.01 \pm 0.00$ \\
        \midrule
        \multirowcell{2}{2} 
        & HV & $5.48 \pm 0.05$ & $5.54 \pm 0.10$ & $5.61 \pm 0.10$ & $\textbf{5.65} \pm \textbf{0.07}$ \\
        & SP & $0.01 \pm 0.00$ & $0.01 \pm 0.00$ & $0.01 \pm 0.00$ & $0.01 \pm 0.00$ \\
        \bottomrule
    \end{tabular}
    \end{adjustbox}
    \caption{Performance of preference-conditioned diffusion models: 1 and 2 represent MO-Swimmer and MO-AntXY, respectively.}
    \label{tbl:abl:2}
\end{table}
\subsection{Case Study on Autonomous Driving}
To verify the applicability of our framework, we conduct a case study with autonomous driving scenarios in the CARLA simulator~\cite{carla}.
In Figure~\ref{fig:carla}, the comfort mode agent drives slowly without switching lanes, while the sport mode agent accelerates and frequently switches lanes (indicated by dotted arrow) to overtake front vehicles (highlighted by dotted circle) ahead.
Using the distinct datasets collected from these two different driving modes, $\ourmodel$ generates a set of diverse \textit{custom} driving policies.
Specifically, as depicted in the bottom of Figure~\ref{fig:carla}, the closer the custom agent's behavior is to the sport mode, the more it tends to switch lanes (increasing from $0$ to $2$) and to drive at higher speeds with lower energy efficiency. The agent in custom mode-2 balances between the comfort and sport modes well, maintaining the moderate speed and changing lanes once.
\begin{figure}[ht]
    \centering
    \includegraphics[width=\linewidth]{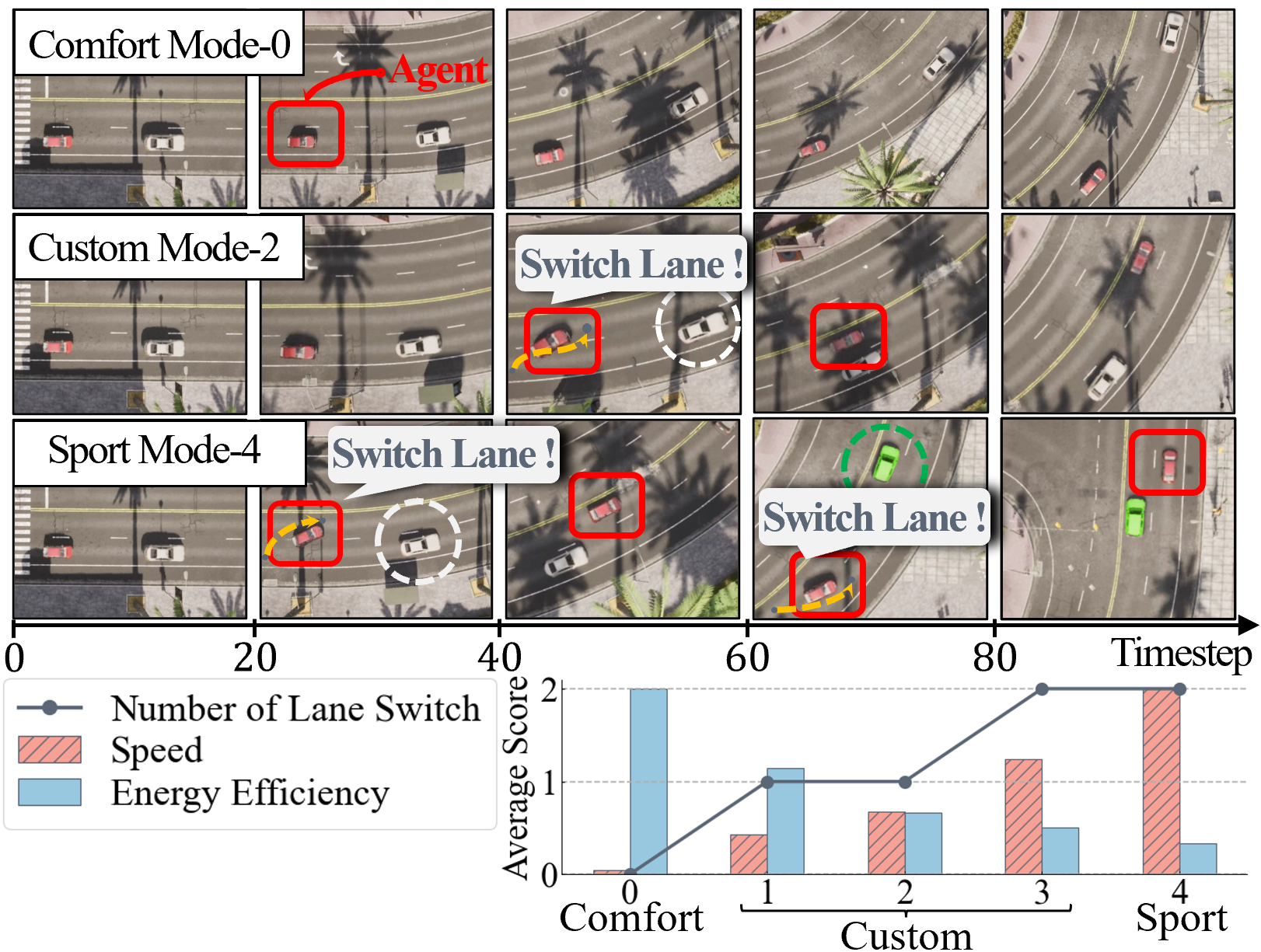}
    \caption{Visualization of agents obtained via $\ourmodel$: the red rectangle denotes learned agent, the dotted lines denote lane changes, the dotted circles denote the vehicles overtaken by our agent.}
    \label{fig:carla}
\end{figure}

\section{Related Work}
\noindent \textbf{Multi-objective RL.}
In the RL literature, several multi-objective optimization methods were introduced, aiming at providing robust approximation of a Pareto policy set.
%
%
\cite{morl:moq,morl:pgmorl} explored Pareto policy set approximation through reward scalarization in online settings, where reward signals are provided. 
%
%
%
%
Recently, \cite{moil:pdt} proposed the Pareto decision transformer in offline settings, requiring a comprehensive dataset that covers all preferences.
These prior works and ours share a similar goal to achieve a tradeoff-aware agent based on Pareto policy set approximation.
However, different from the prior works, our work concentrates on practical situations with the strictly limited datasets and without any rewards from the environment.

\noindent \textbf{Inverse RL.}
To infer a reward function from datasets, IRL has been investigated along with adversarial schemes.
\cite{irl:airl} established the practical implementation of IRL based on the generative adversarial framework; which was further investigated by~\cite{irl:mlirl,irl:iqlearn,irl:diffail}.
Recently, \cite{moirl:mf} introduced a multi-objective reward function recovery method, using a simple discrete grid-world environment.
Contrarily, our $\ourmodel$ targets the approximation of a Pareto policy set. 
Instead of exploring the linear combinations of rewards, $\ourmodel$ employs the reward distance metric, and further, optimizes the performance lower bound of learned policies.

\noindent \textbf{Reward Function Evaluation.}
Reward function evaluation is considered important in the RL literature, but was not fully investigated. 
\cite{rw:epic} first proposed the EPIC by which two reward functions are directly compared without policy optimization, and verified that the policy regret is bounded. This was extended by~\cite{rw:dard} for mitigating erroneous reward evaluation. 
%
%
However, those rarely investigated how to use such metrics for multi-objective learning. 
Our work is the first to conjugate reward function evaluation for Pareto policy set approximation in IRL settings.
\section{Conclusion}
We presented the $\ourmodel$ framework to induce a Pareto policy set from strictly limited datasets in terms of preference diversity.
In $\ourmodel$, the recursive IRL with the reward distance regularization is employed to achieve the Pareto policy set.
The set is then distilled to the preference-conditioned diffusion policy, enabling robust policy adaptation to unseen preferences and resource efficient deployment. Our framework is different from the existing IRL approaches in that they only allow for imitating an individual policy from given datasets.
%
%
\section*{Acknowledgements}
We would like to thank anonymous reviewers for their valuable comments. 
This work was supported by 
Institute of Information \& communications Technology Planning \& Evaluation (IITP) grant funded by the Korea government (MSIT) (No.
2022-0-01045, 
2022-0-00043, 
2021-0-00875, 
2020-0-01821, 
2019-0-00421) 
 and by the National Research Foundation of Korea (NRF) grant funded by the MSIT 
(No. RS-2023-00213118) 
and by Samsung electronics.



\bibliographystyle{named}
\bibliography{ijcai24}

\newpage 
\appendix
\appendix

\section{Reward Distance Regularization}
In this section, we briefly explain the EPIC distance and provide theoretical analysis on our reward distance regularized loss based on EPIC.
Then, we discuss our motivation for target distance and generalize our regularized loss for more than two objectives cases.

\subsection{Equivalent Policy Invariant Comparison (EPIC)}
EPIC~\cite{rw:epic} is computed using the Pearson distance between two canonically shaped reward functions for independent random variables $\StateSet$, $\StateSet'$ sampled from state distribution $\mathcal{D}_\StateSet$ and $\ActionSet$ sampled from action distribution  $\mathcal{D}_\ActionSet$. Then, EPIC distance $d_\epsilon$ between two reward functions $r$ and $r'$ is calculated by
\begin{equation} \label{equ:epic}
\begin{aligned}
    d_{\epsilon} (r,r') = d_\rho ({C}_{\mathcal{D}_S,\mathcal{D}_A}(r) & (S,A,S'), \\
    & {C}_{\mathcal{D}_S, \mathcal{D}_A}(r')(S,A,S'))
\end{aligned}
\end{equation}
where $d_\rho(X,Y) = \sqrt{1 - \rho(X,Y)} / \sqrt{2}$ and $\rho(X,Y)$ is the Pearson correlation between random variables $X$ and $Y$. The canonicalized reward function $C$ is defined as
\begin{equation}
\label{equ:epic-canon}
\begin{aligned}
   {C}&_{\mathcal{D}_S, \mathcal{D}_A} (r)(s,a,s') = r(s,a,s') \\
   & + \mathbb{E} [ \gamma r(s',A, S') - r(s,A,S') - \gamma r(S,A,S') ]. 
\end{aligned}
\end{equation}

\subsection{Regret Bounds of Reward Distance Regularization Based on EPIC}
In this section, we present the proof of regret bounds for the policy learned through our reward distance regularization based on EPIC. 
We start by Lemma~\ref{lemma:1}, where we show the relaxed Wasserstien distance $W_\alpha$ equals to $0$ between a distribution $\mathcal{D}_i$ and $\frac{1}{m}\sum_{i=1}^m \mathcal{D}_i$.
Next, in Theorem~\ref{theorem:1}, we prove the regret bounds in terms of the EPIC distance between the reward functions and $W_\alpha$.

\begin{lemma} \label{lemma:1} 
    Let $\mathcal{D}_1, ..., \mathcal{D}_m$ be arbitrary distributions over transitions $S \times A \times S$. For $\alpha \geq m$ and $i \in \{1,...,m\}$, 
    \begin{equation}
        W_\alpha(\mathcal{D}_i, (\mathcal{D}_1 + ... + \mathcal{D}_m) / m) = 0
    \end{equation}
    where $W_\alpha$ is relaxed Wasserstein distance (Definition A.13 in~\cite{rw:epic}).
\end{lemma}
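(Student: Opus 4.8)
The plan is to reduce the claim to an elementary domination property of the uniform mixture $\bar{\mathcal{D}} := \frac{1}{m}\sum_{j=1}^m \mathcal{D}_j$, and then exhibit an explicit zero-cost witness for the relaxed Wasserstein distance. First I would recall Definition A.13 from \cite{rw:epic}: the relaxed Wasserstein distance $W_\alpha(P,Q)$ permits transporting $P$ onto a target measure dominated by $\alpha Q$ (equivalently, it minimizes a transport cost of the form $W_1(P,\mu)$ over measures $\mu$ whose density relative to $Q$ is bounded by $\alpha$). The consequence I want to extract is that $W_\alpha(P,Q)=0$ whenever $P$ itself is dominated by $\alpha Q$, since then the identity transport $\mu = P$ is feasible and incurs no cost.

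The key step is the domination estimate $\mathcal{D}_i \leq \alpha\,\bar{\mathcal{D}}$ for $\alpha \geq m$. Since every $\mathcal{D}_j$ is a nonnegative measure, $\bar{\mathcal{D}} = \frac{1}{m}\sum_{j=1}^m \mathcal{D}_j \geq \frac{1}{m}\mathcal{D}_i$ pointwise; equivalently, the Radon--Nikodym density of $\mathcal{D}_i$ with respect to $\bar{\mathcal{D}}$ is bounded by $m$ almost everywhere, because at any point the numerator $d\mathcal{D}_i$ is one of the $m$ nonnegative summands making up $m\,d\bar{\mathcal{D}}$. As $\alpha \geq m$, this yields $\tfrac{d\mathcal{D}_i}{d\bar{\mathcal{D}}} \leq m \leq \alpha$, i.e.\ $\mathcal{D}_i \leq \alpha\,\bar{\mathcal{D}}$.

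With this inequality in hand I would conclude by taking $\mu = \mathcal{D}_i$ as the feasible target in the relaxed transport problem: it satisfies the domination constraint $\mu \leq \alpha\,\bar{\mathcal{D}}$, and the identity coupling sends each transition to itself at zero cost, so $W_\alpha(\mathcal{D}_i, \bar{\mathcal{D}}) \leq 0$. Nonnegativity of $W_\alpha$ then forces equality to $0$, which is exactly the claim.

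I expect the only real obstacle to be aligning the argument with the precise formulation of Definition A.13 in \cite{rw:epic}, since the relaxation can be phrased either as a constraint on a coupling's second marginal or as an infimum over dominated measures. In either phrasing, the feasibility of the identity transport under $\mathcal{D}_i \leq \alpha\,\bar{\mathcal{D}}$ is exactly what makes the cost vanish, so the domination estimate is the heart of the matter and everything else is bookkeeping; I would double-check that the total-mass conventions in the chosen formulation are respected by the witness $\mu = \mathcal{D}_i$.
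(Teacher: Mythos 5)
Your proposal is correct and follows essentially the same route as the paper's proof: both rest on the domination estimate $\mathcal{D}_i \leq m\,\bar{\mathcal{D}} \leq \alpha\,\bar{\mathcal{D}}$, which makes the diagonal (identity) coupling with both marginals equal to $\mathcal{D}_i$ feasible under the relaxed second-marginal constraint, and that coupling has zero transport cost. The paper merely spells out the diagonal coupling more explicitly as the pushforward of $\mathcal{D}_i$ under $x \mapsto (x,x)$, but the substance is identical to your argument.
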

\begin{proof}
    For simplicity, We denote $\mathcal{D} = (\mathcal{D}_1 + ... + \mathcal{D}_m) / m$. By the definition of relaxed Wasserstein distance,
    \begin{equation}
     W_\alpha(\mathcal{D}_i, \mathcal{D})  = \inf_{p\in\Gamma_\alpha(\mathcal{D}_i, \mathcal{D})} \int_{S\times S} \| x-y \|dp(x,y)
    \end{equation}
    where $\Gamma_\alpha(\mathcal{D}_i, \mathcal{D})$ is a set of probability measures on $S \times S$ satisfying
    \begin{equation}
            \int_S p(x, y)dy = \mathcal{D}_i(x),\ \int_S p(x, y)dx \leq \alpha \mathcal{D}(y)
    \end{equation}
    for all $x, y \in S$.
    Let the set $S_D = \{(x, x) | \, x \in S\}$ be a diagonal set on $S \times S$.
    For function $f: S \rightarrow S_D$ such as $f(x) = (x, x)$, the Borel probability measure $\mu:S_D \rightarrow \mathbb{R}$ is defined as $\mathcal{D}_i \circ f^{-1}$. 
    Furthermore, for all Borel sets $X \in S \times S$, the Borel probability measure $p$ on $S \times S$ is defined as $p(X) = \mu(X \cap S_D)$~\cite{wasser}. Then, 
    \begin{equation}
            \int_S p(x, y)dy = \mathcal{D}_i(x),\ \int_S p(x, y)dx = \mathcal{D}_i(y)
    \end{equation}
    hold for all $x, y \in S$. Since $\mathcal{D}_i$ is non-negative and finite, for all $i \in \{1,...m\}$, we obtain
    \begin{equation}
        \int_S p(x, y)dx = \mathcal{D}_i(y) \leq m \cdot \mathcal{D}(y).
    \end{equation}
    Thus, the relaxed Wasserstein distance between $\mathcal{D}_i$ and $\mathcal{D}$ is equal to 
    \begin{equation}
    \begin{aligned}
        & \int_{S\times S} \| x-y \|dp(x,y) \\
        & = \int_{S_D} \| x - y \| dp(x, y) + \int_{S_D^c} \| x - y \| dp(x, y) = 0
    \end{aligned}
    \end{equation}
    where $S_D^c = S \times S \setminus S_D$.
\end{proof}
\begin{theorem} \label{theorem:1}
    Let $\mathcal{D}$ be the distribution over transitions $S \times A \times S$ that is used to compute EPIC distance $d_\epsilon$, and let  $\mathcal{D}_{\pi, t}$ be the distribution over the transitions on timestep $t$ induced by policy $\pi$. 
    Let $\tilde{r}$ be our learned reward function, and let $\pi^*_r$ be the optimal policy with respect to reward function $r$. 
    Suppose that there exists a (ground truth) multi-objective function $r_{\mo} = \bm{\omega}^T \mathbf{r}$ with preference $\bm{\omega} = [\omega_1, ..., \omega_m]$. 
    For $\alpha \geq m$, the regret bounds of $\pi^*_{\tilde{r}}$ at most correspond to 
    \begin{equation}
        \begin{aligned}
            R&_{r_{\text{mo}}}(\pi_{r_{\text{mo}}}^*)- R_{r_{\text{mo}}}(\pi_{\tilde{r}}^*) \\
            & \leq 16mK\|r_{\mo}\|_2 \left(\sum_{i=1}^n \left[ \omega_i d_\epsilon (\tilde{r}, \tilde{r}_i) \right] + \frac{L}{K}\Delta_\alpha(\tilde{r}) \right)
        \end{aligned}
    \end{equation}
    where $\Delta_\alpha(\tilde{r}) = \sum_{t=0}^T\gamma^t W_\alpha(\mathcal{D}_{\pi^*_{\tilde{r}}, t}, \mathcal{D})$ and $K = \alpha / (1-\gamma)$.
\end{theorem}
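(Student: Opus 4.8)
The plan is to follow the three-step derivation already carried out for the two-objective case in~\eqref{equ:regbound_1}--\eqref{equ:regbound_3} and generalize it to $m$ objectives, with Lemma~\ref{lemma:1} supplying the ingredient that lets a single reference distribution $\mathcal{D}$ serve all objectives at once.

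First, I would use the linearity of $r_{\mo}$ to split the scalarized regret into a $\bm{\omega}$-weighted sum of per-objective regrets, exactly as in~\eqref{equ:regbound_1}, so that $R_{r_{\mo}}(\pi_{r_{\mo}}^*) - R_{r_{\mo}}(\pi_{\tilde{r}}^*) = \sum_{i=1}^m \omega_i\big(R_{\tilde{r}_i}(\pi_{r_{\mo}}^*) - R_{\tilde{r}_i}(\pi_{\tilde{r}}^*)\big)$. Because $\pi_{\tilde{r}_i}^*$ maximizes the return under $\tilde{r}_i$, each term satisfies $R_{\tilde{r}_i}(\pi_{r_{\mo}}^*) \le R_{\tilde{r}_i}(\pi_{\tilde{r}_i}^*)$, so the regret is upper bounded by $\sum_i \omega_i\big(R_{\tilde{r}_i}(\pi_{\tilde{r}_i}^*) - R_{\tilde{r}_i}(\pi_{\tilde{r}}^*)\big)$. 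This reduces the claim to controlling $m$ individual single-objective regrets.

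Next, I would apply Theorem A.16 of~\cite{rw:epic} to each objective $i$, bounding its individual regret by a term of the form $16\|\tilde{r}_i\|_2\big(K d_\epsilon(\tilde{r},\tilde{r}_i) + L\,W_\alpha\text{-terms}\big)$, mirroring~\eqref{equ:regbound_2}. The EPIC regret guarantee is stated relative to a reference distribution, and the crucial move is to make that reference the common mixture $\mathcal{D} = \frac{1}{m}\sum_{i} \mathcal{D}_i$ uniformly across objectives. This is where Lemma~\ref{lemma:1} is invoked: for $\alpha \ge m$ it gives $W_\alpha(\mathcal{D}_i, \mathcal{D}) = 0$, so the per-objective Wasserstein slack collapses onto the single shared quantity $\Delta_\alpha(\tilde{r}) = \sum_{t} \gamma^t W_\alpha(\mathcal{D}_{\pi^*_{\tilde{r}}, t}, \mathcal{D})$ without accumulating extra contributions. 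Notably, the hypothesis $\alpha \ge m$ in the theorem statement is exactly the condition required by Lemma~\ref{lemma:1}.

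Finally, I would collect constants: bounding $\|\tilde{r}_i\|_2 \le m\|r_{\mo}\|_2$ termwise and using $\sum_i \omega_i = 1$, the weighted sum consolidates into $16mK\|r_{\mo}\|_2\big(\sum_{i} \omega_i d_\epsilon(\tilde{r},\tilde{r}_i) + \tfrac{L}{K}\Delta_\alpha(\tilde{r})\big)$, which recovers the two-objective constant $32K$ when $m=2$. I expect the main obstacle to be the joint application of Theorem A.16 and Lemma~\ref{lemma:1}: one must verify that the reference distribution appearing in the EPIC bound can be legitimately replaced by the mixture $\mathcal{D}$ for every objective, and that the Wasserstein terms genuinely fold into one $\Delta_\alpha(\tilde{r})$ rather than summing. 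By comparison, the linearity/optimality step and the norm bookkeeping that pins down the factor $16m$ are routine.
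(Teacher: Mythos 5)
Your proposal follows essentially the same route as the paper's proof: linearity of $r_{\mo}$ plus optimality of each $\pi^*_{\tilde{r}_i}$ to reduce to per-objective regrets, Theorem A.16 of the EPIC paper applied termwise, Lemma~\ref{lemma:1} with $\alpha \geq m$ to make the mixture $\mathcal{D}$ a valid common reference so the Wasserstein terms collapse to the single $\Delta_\alpha(\tilde{r})$, and then constant bookkeeping to reach the $16mK\|r_{\mo}\|_2$ prefactor. The only cosmetic difference is that you make the norm step explicit as $\|\tilde{r}_i\|_2 \leq m\|r_{\mo}\|_2$ where the paper merely asserts the $\tilde{r}_i$ are bounded by a constant; both treatments are equally (in)formal at that point.
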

\begin{proof}
    According to Theorem A.16 in~\cite{rw:epic}, for any $\alpha \geq 1$, the regret bounds of a policy $\pi_{r_i}^*$ for reward function $r_j$ are calculated by
    \begin{equation}
        \begin{aligned}
            R&_{r_j}(\pi_{r_j}^*) - R_{r_j}(\pi_{r_i}^*) \\
            & \leq 16 \lVert r_j \rVert_2\left(\frac{\alpha}{1-\gamma} d_\epsilon (r_i, r_j) + L\sum_{t=0}^{T}\gamma^t B_\alpha (t)\right)
        \end{aligned}
        \label{equ:regbound}
    \end{equation}
    where $R_{r}(\pi)$ denotes returns of policy $\pi$ on reward function $r$, and reward functions $r_i, r_j$ are L-lipschitz continuous on the $L_1$ norm.
    With the linearity of $r_{\mo}$, we obtain
    \begin{equation}
        \begin{aligned}
            R_{r_{\mo}}(\pi_{r_{\mo}}^*) & - R_{r_{\mo}}(\pi_{\tilde{r}}^*) \\
            & = \sum_{i=1}^m \omega_i (R_{\tilde{r}_i}(\pi_{r_{\mo}}^*) - R_{\tilde{r}_i}(\pi_{\tilde{r}}^*)) \\
            & \leq  \sum_{i=1}^m \omega_i (R_{\tilde{r}_i}(\pi_{\tilde{r}_{i}}^*) - R_{\tilde{r}_i}(\pi_{\tilde{r}}^*)).
        \end{aligned}
        \label{equ:regbound_2}
    \end{equation}
    By~\eqref{equ:regbound}, then, the last term in~\eqref{equ:regbound_2} is bounded by the sum of individual regret bounds, i.e., 
    \begin{equation}
        \begin{aligned}
            \sum_{i=1}^m & \omega_i (R_{\tilde{r}_i}(\pi_{\tilde{r}_{i}}^*) - R_{\tilde{r}_i}(\pi_{\tilde{r}}^*)) \\
            & \leq \sum_{i=1}^m{16 \omega_i \lVert \tilde{r}_i \rVert_2\left(K d_\epsilon (\tilde{r}, \tilde{r}_i) + L\sum_{t=0}^{T}\gamma^t B_\alpha (t) \right)}
        \end{aligned}
    \end{equation}
    where $K = \alpha / (1-\gamma)$.
    Since $\mathcal{D}$ is equivalent to the distribution over transitions induced by $\pi^*_{\tilde{r}_1}, ..., \pi^*_{\tilde{r}_m}$ in our sampling procedure, $B_\alpha$ can be simplified in 
    \begin{equation}
        \begin{aligned}
            B_\alpha(t) & = \max_{\pi\in\{\pi_{\tilde{r}_i}^*, \pi_{\tilde{r}}^*\}} W_\alpha(\mathcal{D}_{\pi, t}, \mathcal{D}) \\
            & = W_\alpha(\mathcal{D}_{\pi^*_{\tilde{r}}, t}, \mathcal{D})
        \end{aligned}
    \end{equation}
    for $\alpha \geq m$ by Lemma~\ref{lemma:1}. 
    For simplicity, we use the episodic cumulative Wasserstein distance $\Delta_\alpha(\tilde{r}) = \sum_{t=0}^T \gamma^t B_\alpha(t)$.
    In practice, reward function $\tilde{r}_i$ is bounded by some constant. Consequently, we obtain
    \begin{equation}
        \begin{aligned}
            R&_{r_{\mo}}(\pi_{r_{\mo}}^*)- R_{r_{\mo}}(\pi_{\tilde{r}}^*) \\
            & \leq 16mK\| r_{\mo} \| \left(\sum_{i=1}^n \left[ \omega_i d_\epsilon (\tilde{r}, \tilde{r}_i) \right] + \frac{L}{K}\Delta_\alpha(\tilde{r}) \right).
        \end{aligned}
    \end{equation}
    This ensures that the regret bounds of $\pi^*_{\tilde{r}}$ with respect to $r_{\mo}$ can be directly optimized by using the loss (11) in the main manuscript.
\end{proof}

\subsection{Motivation for Target Distance}
Here we discuss our motivation for the target distance $\bm{\epsilon}^g$ mentioned in (6)-(8) of the main manuscript.
A straightforward approach for incorporating the reward distance regularized loss is to use the weighted sum loss of preference weight and reward distances. However, we observe that the weighted sum loss frequently leads to unstable learning when targeting to balance between the reward distances.
Thus, we take a different approach, using the target distance, in a way that the target distance is used as the target for the reward distances.

Specifically, as the triangle inequality of reward distance metrics, the sum of the target distance cannot exceed the distance between the reward functions derived from the previous step.
Thus, by setting the sum of the target distance as defined in (8) in the main manuscript and using the L2 loss between the target and reward distances, we are able to stabilize the learning procedure in $\ourmodel$.
Furthermore, we deliberately set one of the target distances (specifically, $\epsilon^g_{i,i}$) to be as small as possible. 
This allows for gradual interpolation between adjacently learned policies. 
Table~\ref{tbl:refpref} demonstrates the effectiveness of the target distance, showing $8.85\%$ gain in HV over the naive approach that directly uses the reward distance metric in the form of the \textit{weighted sum} loss.
\begin{table}[h]
    \centering
    \small
    \begin{tabular}{lcc}
    \toprule
    \textbf{Environment} & \textbf{Weighted sum}   & \textbf{$\ourmodel$} \\
    \midrule 
    \textbf{MO-Car} & $5.10 \pm 0.04$ & $5.37 \pm 0.08$ \\
    \textbf{MO-Cheetah} & $4.34 \pm 0.13$ & $4.97 \pm 0.13$ \\
    \bottomrule
    \end{tabular}
    \caption{Performance in HV w.r.t preference weights}
    \label{tbl:refpref}
\end{table}

\subsection{Generalization of Reward Distance Regularization}
In this section, we extend our reward distance regularization to accommodate general cases involving more than two objectives ($M \geq 3)$.
%
%
Similar to the two objective case, we consider the triangle inequality of the reward distance metric between the learning reward function $\tilde{r}^g_i$ and any two arbitrary reward functions $\tilde{r}^{g-1}_{k}, \tilde{r}^{g-1}_{l} \in \tilde{\mathbf{r}}^{g-1}$ derived in the previous step,
\begin{equation} \label{equ:ntriangle}
    d(\tilde{r}^g_i, \tilde{r}^{g-1}_k) + d(\tilde{r}^g_i, \tilde{r}^{g-1}_l) \geq d(\tilde{r}^{g-1}_k, \tilde{r}^{g-1}_l).
\end{equation}
By leveraging this inequality, we limit the sum of target distances as
\begin{equation} \label{equ:moirl:trdsum}
     \hat{\epsilon}_i^g = \sum\nolimits_{j=1}^{M} \epsilon_{i,j}^{g} =
     \frac{1}{M-1} \sum\nolimits_{k=1}^{M} \sum\nolimits_{l=1}^{M} d(\tilde{r}_{k}^{g-1},\tilde{r}_{l}^{g-1}).
\end{equation}
Note that $M$ is number of objectives, which is equivalent to the number of given expert datasets.
For example, when $M=3$, let the three reward functions derived in the previous step be $\tilde{r}^{g-1}_1,\tilde{r}^{g-1}_2$ and $\tilde{r}^{g-1}_3$.
Using the triangle inequality in~\eqref{equ:ntriangle}, we establish the following set of inequalities for the newly derived one $\tilde{r}^g_i$ 
\begin{equation}
\begin{aligned} \label{equ:3triangle}
    & d(\tilde{r}^g_i, \tilde{r}^{g-1}_1) + d(\tilde{r}^g_i, \tilde{r}^{g-1}_2) \geq d(\tilde{r}^{g-1}_1, \tilde{r}^{g-1}_2) \\
    & d(\tilde{r}^g_i, \tilde{r}^{g-1}_2) + d(\tilde{r}^g_i, \tilde{r}^{g-1}_3) \geq d(\tilde{r}^{g-1}_2, \tilde{r}^{g-1}_3) \\
    & d(\tilde{r}^g_i, \tilde{r}^{g-1}_3) + d(\tilde{r}^g_i, \tilde{r}^{g-1}_1) \geq d(\tilde{r}^{g-1}_3, \tilde{r}^{g-1}_1).
\end{aligned}
\end{equation}
Then, by combining all the inequalities, we obtain
\begin{equation} \label{equ:3trdsum}
\begin{aligned}
    d&(\tilde{r}^g_i, \tilde{r}^{g-1}_1) + d(\tilde{r}^g_i, \tilde{r}^{g-1}_2) + d(\tilde{r}^g_i, \tilde{r}^{g-1}_3) \\
    & \geq \frac{1}{2} \left( d(\tilde{r}^{g-1}_1, \tilde{r}^{g-1}_2) +(\tilde{r}^{g-1}_2, \tilde{r}^{g-1}_3) + d(\tilde{r}^{g-1}_3, \tilde{r}^{g-1}_1) \right).
\end{aligned}
\end{equation}
Finally, the right-hand side of~\eqref{equ:3trdsum} above is set to be the sum of the target distances for $M=3$.
We evaluate the extensibility of $\ourmodel$ for more than two objective in MO-Car* environment, where the result is shown in the Table 1 of the main manuscript.

\section{Benchmark Environments}
In this section, we show the details about our multi-objective environments used for evaluation.
\subsection{MO-Car} \label{env:carcontrol}
MO-Car is a simple 1D environment, where the agent controls the car with an acceleration $a\in[-1,1]$. We configure the two objectives as forward speed and energy efficiency,
\begin{equation}
    \begin{cases}
    & r_1 = 0.05 \times v \\
    & r_2 = 0.3 - 0.15 a^2
    \end{cases}
\end{equation}
where $v$ is the speed.
%
%
%
\subsection{MO-Swimmer}
MO-Swimmer is a multi-objective variant of the MuJoCo~\cite{mujoco} Swimmer environment, where an agent moves forward by applying torques on two rotors. We configure the two objectives as forward speed and energy efficiency,
\begin{equation}
    \begin{cases}
    & r_1 = v_x \\
    & r_2 = 0.3 - 0.15 \sum\nolimits_i a_i^2
    \end{cases}
\end{equation}
where $v_x$ is the speed in $x$ direction, and $a_i$ is the action applied to each rotors.
%
%
\subsection{MO-Cheetah}
MO-Cheetah is a multi-objective variant of the MuJoCo HalfCheetah environment, where an agent moves forward by applying torques on 6 distinct joints of front and back legs. We configure the two objectives as forward speed and energy efficiency,
\begin{equation}
    \begin{cases}
    & r_1 = \text{min}(v_x,4) \\
    & r_2 = 4 - \sum\nolimits_i a_i^2
    \end{cases}
\end{equation}
where $v_x$ is the speed in $x$ direction, and $a_i$ is the action applied to each joints.
%

\subsection{MO-Ant}
MO-Ant is a multi-objective variant of the MuJoCo Ant environment, where an agent moves forward by applying torques on 8 distinct rotors of 4 legs. We configure the two objectives as forward speed and energy efficiency,
\begin{equation}
    \begin{cases}
    & r_1 = v_x \\
    & r_2 = 4 - \sum\nolimits_i a_i^2
    \end{cases}
\end{equation}
where $v_x$ is the speed in $x$ direction, and $a_i$ is the action applied to each rotors.
%
%
\subsubsection{MO-AntXY} \label{env:antxy}
MO-Ant is another multi-objective variant of the MuJoCo Ant environment. We configure the two objectives as x-axis speed and y-axis speed,
\begin{equation}
    \begin{cases}
    & r_1 = v_x + C \\
    & r_2 = v_y + C
    \end{cases}
\end{equation}
where $C=2\sum\nolimits_i a_i^2$ is the energy efficiency, $v_x$ is the speed in $x$ direction, $v_y$ is the speed in $y$ direction, and $a_i$ is the action applied to each rotors.
%
%
\subsection{MO-Car*}
MO-Car* is a variant of MO-Car in Section~\ref{env:carcontrol}, where the agent moves in three directions. We configure the three objectives as x-axis speed, y-axis speed and z-axis speed, 
\begin{equation}
    \begin{cases}
    & r_1 = v_x \\
    & r_2 = v_y \\
    & r_3 = v_z
    \end{cases}
\end{equation}
where $v_x$ is the speed in $x$ direction, $v_y$ is the speed in $y$ direction, and $v_z$ is the speed in $z$ direction.
%

\subsection{Case Study on CARLA}
For case study, we use CARLA~\cite{carla}, where an agent drives along the road with obstacles. 
The agent receives lidar information and an image of size $84 \times 84 \times 3$ as an input. Particularly, the image is processed by image encoder pre-trained with images obtained in CARLA.
Figure~\ref{fig:app:carla_map} visualizes the driving map used in CARLA and Figure~\ref{fig:app:carla_input} shows an example image input.
We configure two objectives as forward speed and enrgy consumption,
\begin{equation}
    \begin{cases}
    & r_1 = v \\
    & r_2 = 1 - a^2 \\
    \end{cases}
\end{equation}
where $v_x$ is the speed. 
%
\begin{figure}[h]
    \centering
        \subfigure[Map of CARLA]{
            \includegraphics[width=0.46\linewidth]{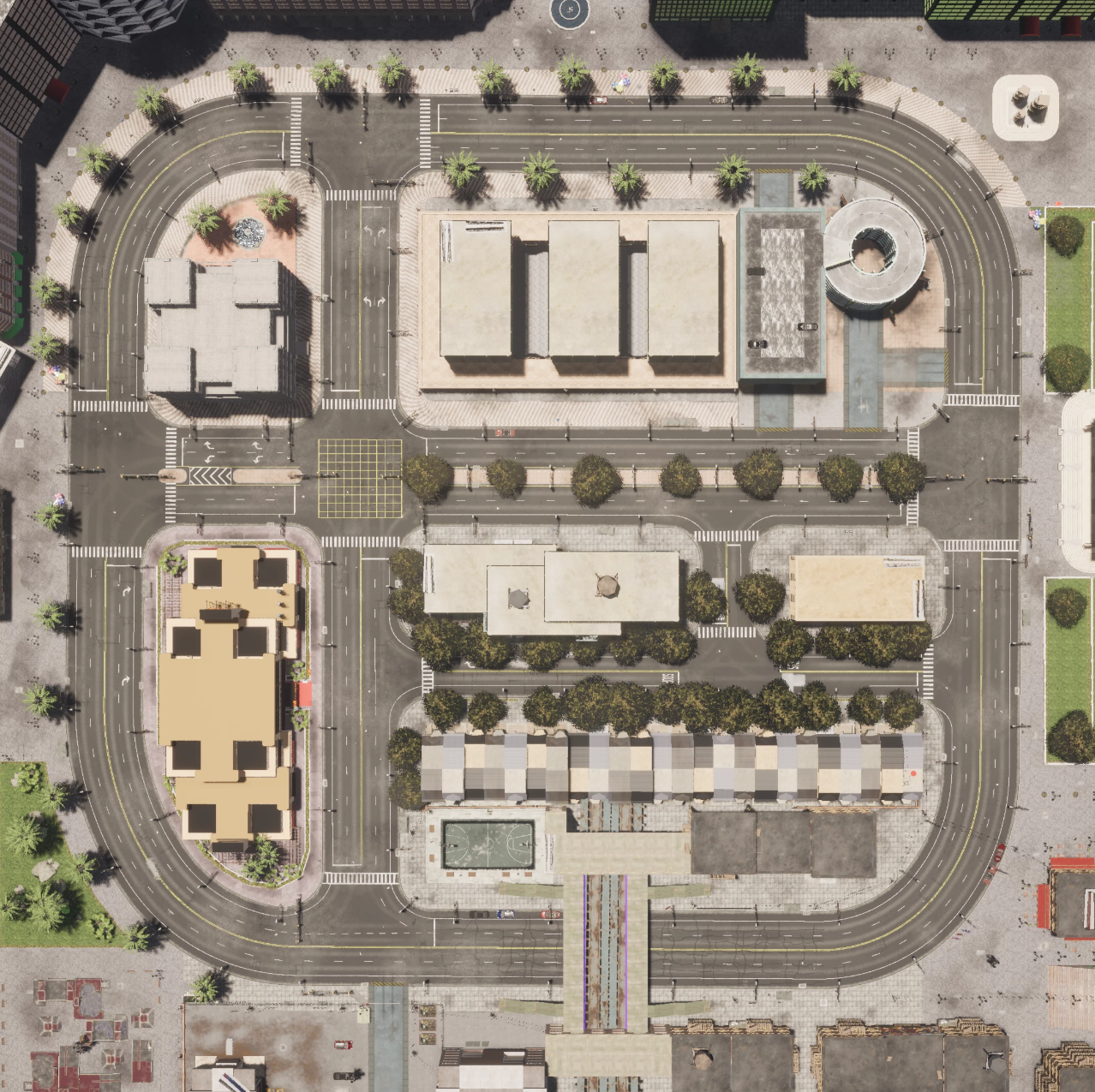}
            \label{fig:app:carla_map}
        } 
        \subfigure[Image input]{
            \includegraphics[width=0.46\linewidth]{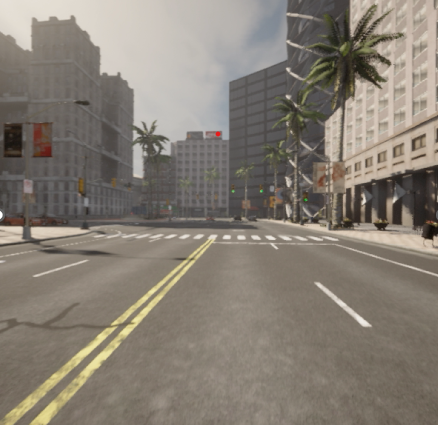}
            \label{fig:app:carla_input}
        }
        \caption{
        Visualization of CARLA map and image input
        }
    \label{fig:app:carla}
\end{figure}
\section{Implementation Details}
In this section, we describe how we generate expert datasets, and show the implementation details of $\ourmodel$ and other baselines with hyperparameter settings used for training.
For all experiments, we use a system of an NVIDIA RTX 3090 GPU and an Intel(R) Core(TM) i9-10900K CPU.

\subsection{Generating Expert Datasets and Oracle}
To generate expert datasets, we emulate expert using PGMORL algorithm~\cite{morl:pgmorl}, which is a state-of-the-art multi-objective RL method. 
%
The implementation of PGMORL is based on the open source project\footnote{\url{https://github.com/mit-gfx/PGMORL}}. 
Using PGMORL on the ground truth reward functions, we are able to collect multiple datasets with different preferences with sufficient diversity.
Among these datasets, we use only two distinct datasets, each associated with a specific preference over multi-objectives.
Regarding to the oracle, we utilize complete datasets generated by PGMORL to measure the performance.
For hyperparameter settings, we use the default settings listed in the PGMORL project.

To emulate different experts for Carla, we use heuristic agent provided by the Carla~\cite{carla}.
By varying the maximum velocity of the agent can reach, we obtain distinct expert datasets.
\subsection{DiffBC}
We implement DiffBC using the denoising diffusion probabilistic model (DDPM)~\cite{il:dubc} with augmented datasets.
%
%
We linearly sample the preference weights $\omega_i \in [0,1]$ where $\sum\nolimits_i \omega_i = 1$, and we train the algorithm with different preference weights multiple times to obtain the approximated Pareto policy set $\PolicySet$, which contains the same number of policies as $\ourmodel$. 
This augmenting method is consistent throughout the baselines.
The hyperparameter settings for DiffBC are summarized in Table~\ref{tbl:bc_params}.
\begin{table}[h]
    \small
    \centering
    \begin{tabular}{lc}
    \toprule
    \textbf{HyperParameter} & \textbf{Value} \\
    \midrule
    Learning rate       & $3 \times 10^{-4}$ \\
    Batch size          & $256$ \\
    Timestep            & $1 \times 10^5$ \\
    Actor network       & $[64,64]$ \\
    Activation function & gelu \\
    Total denoise timestep & $20$ \\
    Variance Scheduler  & cosine \\
    \bottomrule
    \end{tabular}
    \caption{Hyperparameter settings for DiffBC}
    \label{tbl:bc_params}
\end{table}

\subsection{BeT}
We implement BeT using the open source project\footnote{\url{https://github.com/notmahi/bet?tab=readme-ov-file}} with augmented datasets. 
BeT employs the transformer architecture with an action discretization and a multi-task action correction, which allows to effectively learn the multi-modality present in the datasets.
The hyperparameter settings for BeT are summarized in Table~\ref{tbl:bet_params}.
\begin{table}[h]
    \small
    \centering
    \begin{tabular}{lc}
    \toprule
    \textbf{HyperParameter} & \textbf{Value} \\
    \midrule
    Learning rate       & $3 \times 10^{-5}$ \\
    Batch size          & $64$ \\
    Timestep            & $1 \times 10^5$ \\
    Layer Size          & $[256,256]$ \\
    Number of Heads     & $4$ \\
    Activation function & gelu \\
    Number of clusters  & $32$ \\
    History length      & $4$ \\
    \bottomrule
    \end{tabular}
    \caption{Hyperparameter settings for BeT}
    \label{tbl:bet_params}
\end{table}
\subsection{GAIL and AIRL}
We implement GAIL and AIRL using the open source projects Jax\footnote{\url{https://github.com/google/jax}} 
and Haiku\footnote{\url{https://github.com/deepmind/dm-haiku}} with augumented datasets.
These algorithms are structured with a discriminator involving a reward approximator, a shaping term, and a generator (policy).
For the generator, we use the PPO algorithm. 
%
For better convergences, we pretrain a policy with BC for fixed timesteps.
The hyperparameter settings for the generator and discriminator are summarized in Table~\ref{tbl:gen_params} and Table~\ref{tbl:disc_params}, respectively.
\begin{table}[h]
\centering
    \small
    \begin{tabular}{lc}
    \toprule
    \textbf{HyperParameter} & \textbf{Value} \\
    \midrule
    Learning rate       & $3 \times 10^{-4}$ \\
    Epoch               & $50$ \\
    Batch size          & $16384$ \\
    Clip range          & $0.1$ \\
    Max grad norm       & $0.5$ \\
    \multirow{2}{*}[0pt]{Timestep} 
    & $3 \times 10^6$ (MuJoCo) \\
    & $1 \times 10^5$ (others) \\
    Actor network       & $[64,64]$ \\ 
    Value network       & $[64,64]$ \\
    Activation function & tanh \\
    \bottomrule
    \end{tabular}
    \caption{Hyperparameter settings for generator (PPO)}
    \label{tbl:gen_params}
\end{table}
\begin{table}[h]
\centering
    \small
    \begin{tabular}{lc}
    \toprule
    \textbf{HyperParameter} & \textbf{Value} \\
    \midrule
    Learning rate       & $3 \times 10^{-4}$ \\
    Epoch               & $50$ \\
    Batch size          & $64$ \\
    Reward network      & $[32]$ \\
    Shaping network     & $[32,32]$ \\
    Activation function & relu \\
    \bottomrule
    \end{tabular}
    \caption{Hyperparameter settings for discriminator}
    \label{tbl:disc_params}
\end{table}

\subsection{IQ-Learn}
We implement IQ-Learn using the open source project\footnote{\url{https://github.com/Div99/IQ-Learn}} with augmented datasets.
IQ-Learn employs a single Q-function to implicitly represent a reward function and a policy.
For learning, we us the SAC algorithm.
For hyperparameter settings, we use the default settings listed in the IQ-Learn project~\cite{irl:iqlearn}.

\subsection{DiffAIL}
We implement DiffAIL using the open source project\footnote{\url{https://github.com/ML-Group-SDU/DiffAIL}} with augmented datasets.
DiffAIL is structured with a discriminator and a geneartor (policy), where diffusion loss is incorporated into the discriminator objective.
For better convergence, we also pretrain a policy with BC as GAIL and AIRL.
For hyperparameter settings, we use the default settings listed in the DiffAIL project~\cite{irl:diffail}.

\subsection{$\ourmodel$}
We implement the entire procedure of our $\ourmodel$ framework exploiting the open source projects Jax\footnote{\url{https://github.com/google/jax}} 
and Haiku\footnote{\url{https://github.com/deepmind/dm-haiku}}.
We use the same hyperparameter settings for the generator (Table~\ref{tbl:gen_params}) and the discriminator (Table~\ref{tbl:disc_params}).
In addition, the same pre-training method and training timesteps are adopted for learning the individual IRL procedure (the first step) in our $\ourmodel$ framework.  
For each recursive step $g \geq 2$, we use the previously derived policy and the discriminator as the initial point for the IRL procedure of the current $g$. 
This reduces the training time of IRL procedure at each recursive step to at most ${1}/{30}$ of the individual IRL procedure at the first step. 

Regarding the reward regularization loss, we set the hyperparameter $\beta$ to $9$, and we sample the same size of batches across multiple datasets $\{\Dataset_i^g\}_{i=1}^{M}$ to calculate the reward distance.
To canonicalize the rewards, we sample $S,S'$ and $A$ with size of $ 512$ independently from uniform distributions. 
The hyperparameters for $\ourmodel$ are summarized in Table~\ref{tbl:piirl_params}. 
\begin{table}[h]
    \centering
    \small
    \begin{tabular}{lccccc}
    \toprule
    \textbf{Environment} & \textbf{Steps} & \textbf{Timestep} & \textbf{$\beta$} & \textbf{Batch size} \\
    \midrule
    MO-Car   & $6$     & $2.5 \times 10^4$   & $9$   & $128,\ 512$ \\
    MO-Swimmer      & $6$     & $1 \times 10^5$     & $9$   & $128,\ 512$ \\
    MO-Cheetah      & $11$    & $4 \times 10^5$     & $9$   & $128,\ 512$ \\
    MO-Ant          & $11$    & $8 \times 10^5$     & $9$   & $128,\ 512$ \\
    MO-AntXY        & $11$    & $4 \times 10^5$     & $9$   & $128,\ 512$ \\
    MO-Car*         & $6$     & $1.5 \times 10^4$   & $9$   & $128,\ 512$ \\
    CARLA        & $3$     & $3.5 \times 10^4$   & $9$   & $128,\ 512$ \\
    \bottomrule
    \end{tabular}
    \caption{Hyperparameter settings for $\ourmodel$}
    \label{tbl:piirl_params}
\end{table}
%

To train a preference-conditioned diffusion policy, we linearly match the preference weights to the policies learned through our $\ourmodel$.
Then we sample 32 size of batches each from different datasets which are collected from policies to train a preference-conditioned policy.
These policies do not emulate the experts of given datasets, but are used to augment the given datasets with imaginary experts of more diverse preferences.
For evaluation, we arbitrary sample the preference $\omega_i \in [0,1]$, where $\sum\nolimits_i \omega_i = 1$.
The hyperparameter settings for the preference-conditioned policy are summarized in Table~\ref{tbl:pc_params}.
\begin{table}[h]
    \small
    \centering
    \begin{tabular}{lc}
    \toprule
    \textbf{HyperParameter} & \textbf{Value} \\
    \midrule
    Learning rate       & $3 \times 10^{-5}$ \\
    Batch size per demo & $32$ \\
    Timestep            & $2 \times 10^5$ \\
    Actor network       & $[256,256]$ \\
    Activation function & gelu \\
    Total denoise timestep & $50$ \\
    Variance Scheduler  & cosine \\
    Guidance weight ($\delta$) & $1.2$ \\
    \bottomrule
    \end{tabular}
    \caption{Hyperparameter settings for $\ourmodel$+DU}
    \label{tbl:pc_params}
\end{table}

\section{Pareto Frontier Visualization}
Figure~\ref{fig:app:pareto} shows the Pareto frontier all acquired by our framework ($\ourmodel$, $\ourmodel$+DU) and other baselines (DiffBC, BeT, GAIL, AIRL, IQ-Learn, DiffAIL), for each environment. 
For each case, we conduct the experiments with 3 random seeds and visualize the best results regarding HV.
Note that we exclude out-of-order policies obtained from algorithms with respect to preferences, thus, the number of dots are different for the baselines and our framework.
As shown, $\ourmodel$ and $\ourmodel$+DU render the competitive Pareto Frontier of the most densely populated policies compared to the baselines.
\begin{figure*}[t]
    \centering
    \subfigure[MO-CarControl]{
        \begin{adjustbox}{width=1\textwidth}
        \includegraphics[width=0.2\textwidth]{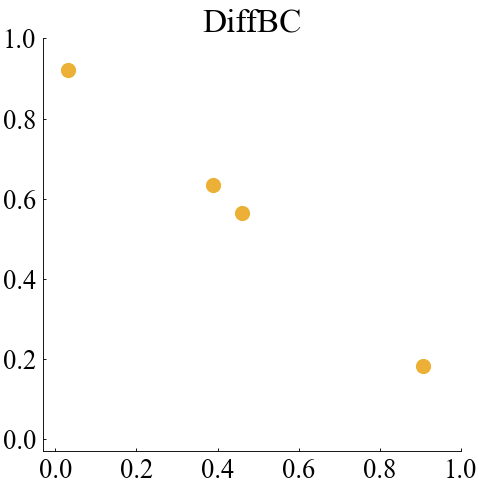}
        \includegraphics[width=0.2\textwidth]{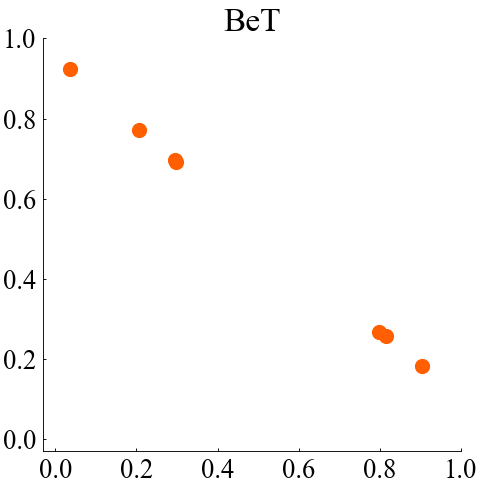}
        \includegraphics[width=0.2\textwidth]{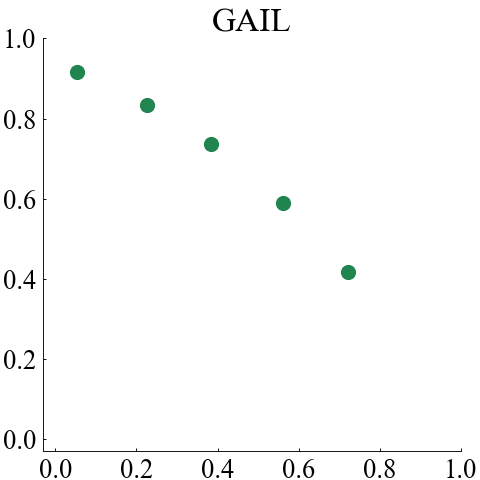}
        \includegraphics[width=0.2\textwidth]{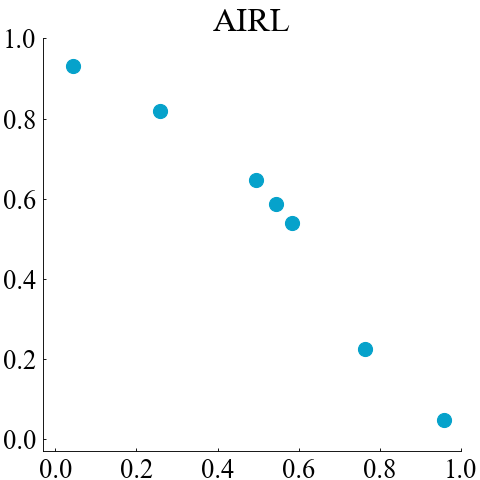}
        \includegraphics[width=0.2\textwidth]{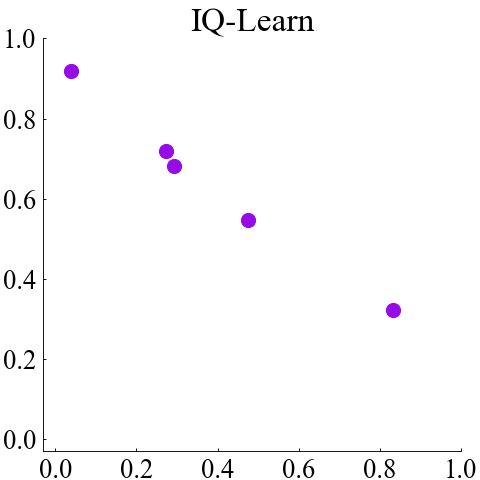}
        \includegraphics[width=0.2\textwidth]{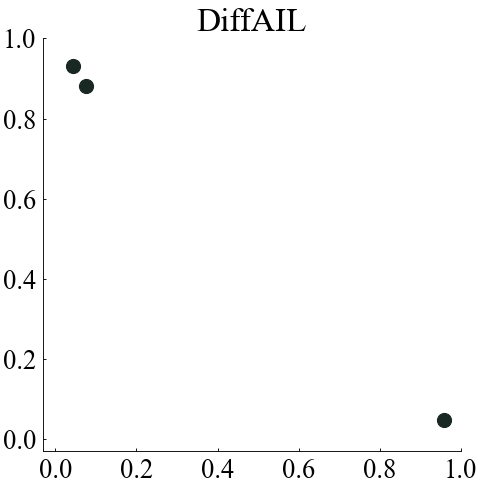}
        \includegraphics[width=0.2\textwidth]{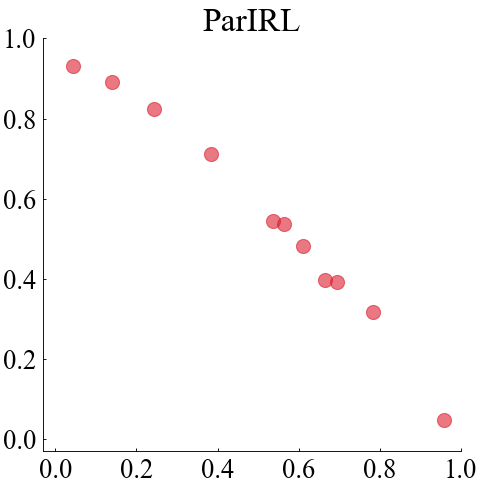}
        \includegraphics[width=0.2\textwidth]{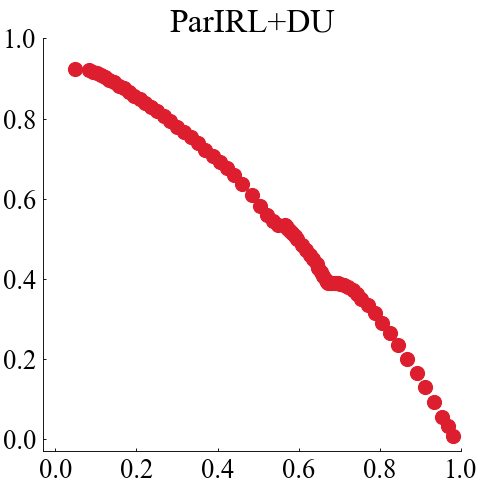}
        \end{adjustbox}
    }
    \subfigure[MO-Swimmer]{
        \begin{adjustbox}{width=1\textwidth}
        \includegraphics[width=0.2\textwidth]{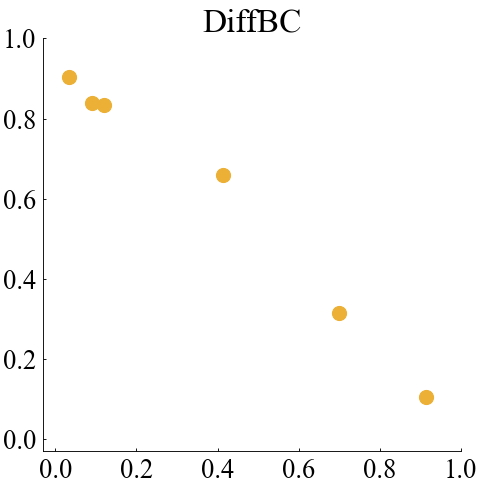}
        \includegraphics[width=0.2\textwidth]{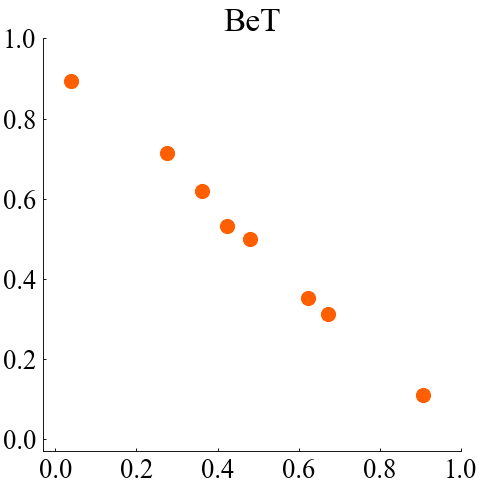}
        \includegraphics[width=0.2\textwidth]{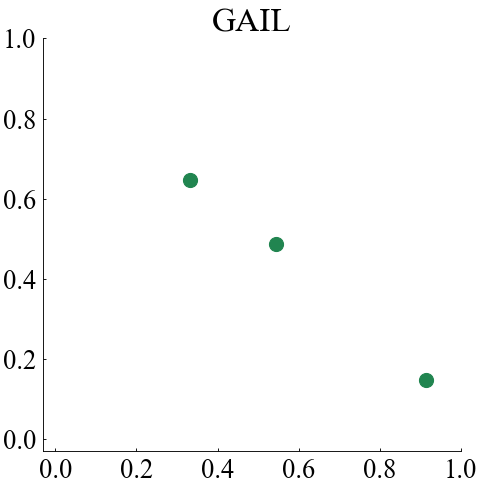}
        \includegraphics[width=0.2\textwidth]{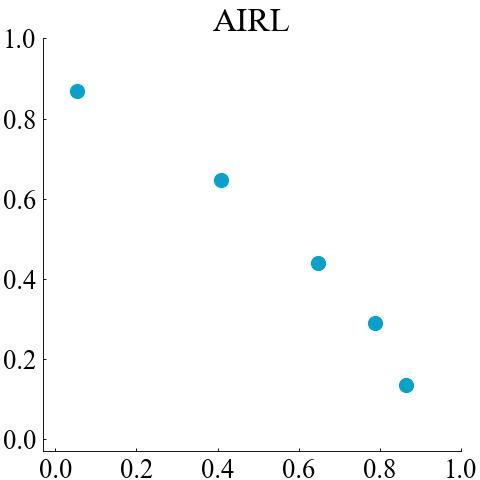}
        \includegraphics[width=0.2\textwidth]{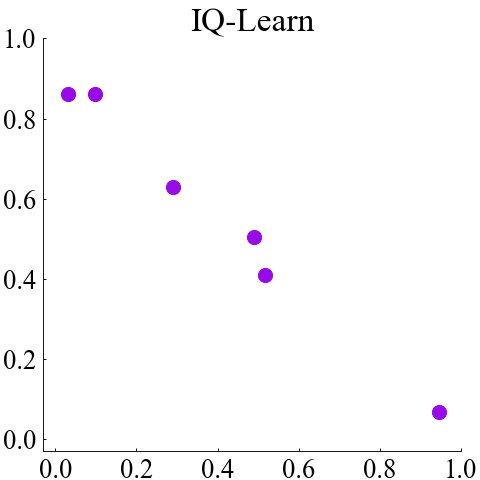}
        \includegraphics[width=0.2\textwidth]{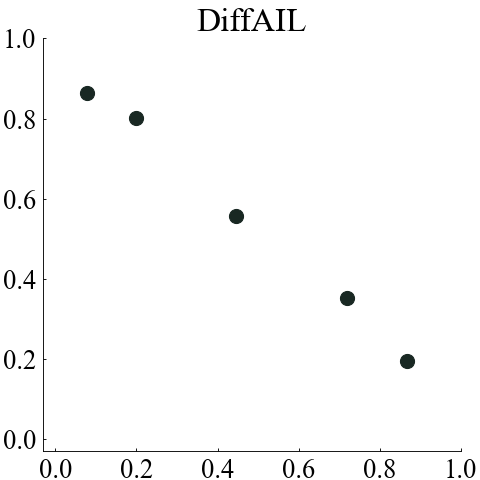}
        \includegraphics[width=0.2\textwidth]{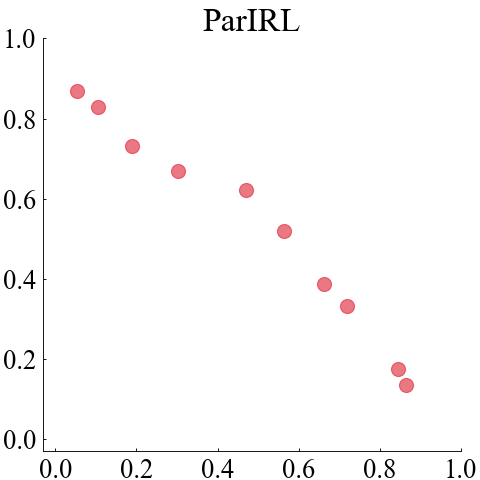}
        \includegraphics[width=0.2\textwidth]{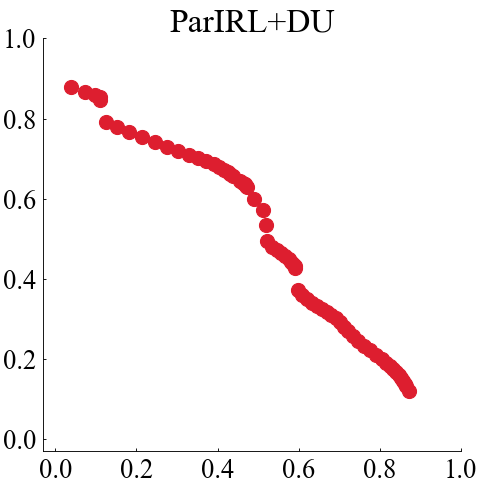}
        \end{adjustbox}
    }
    \subfigure[MO-Cheetah]{
        \begin{adjustbox}{width=1\textwidth}
        \includegraphics[width=0.2\textwidth]{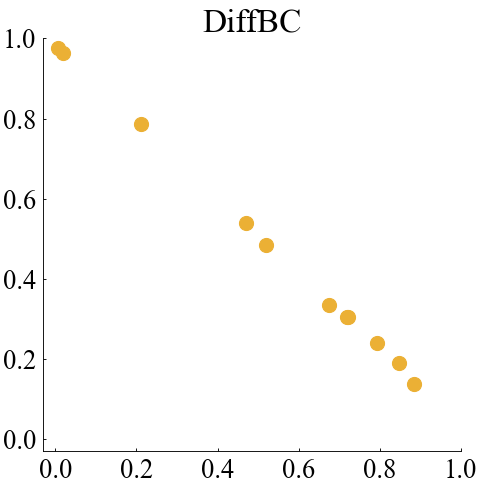}
        \includegraphics[width=0.2\textwidth]{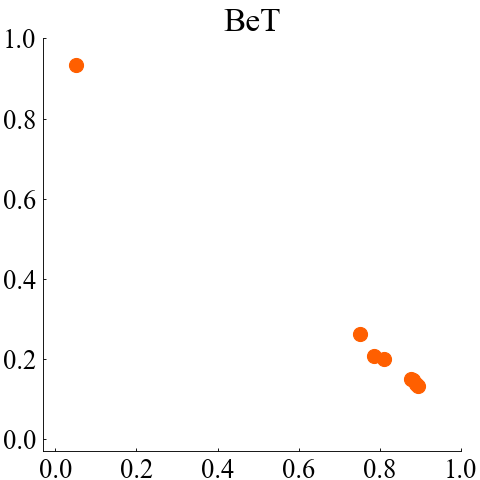}
        \includegraphics[width=0.2\textwidth]{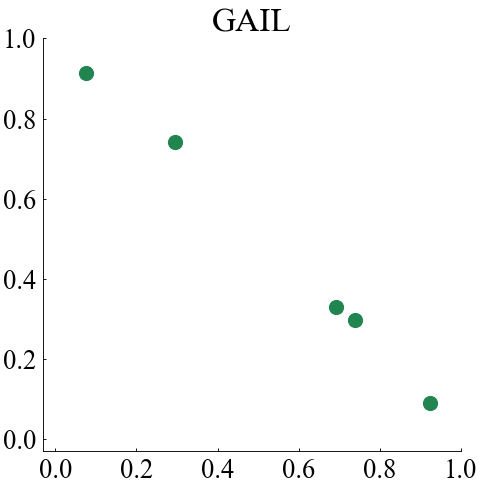}
        \includegraphics[width=0.2\textwidth]{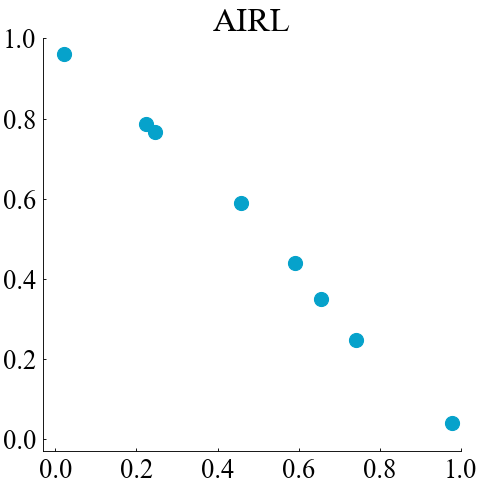}
        \includegraphics[width=0.2\textwidth]{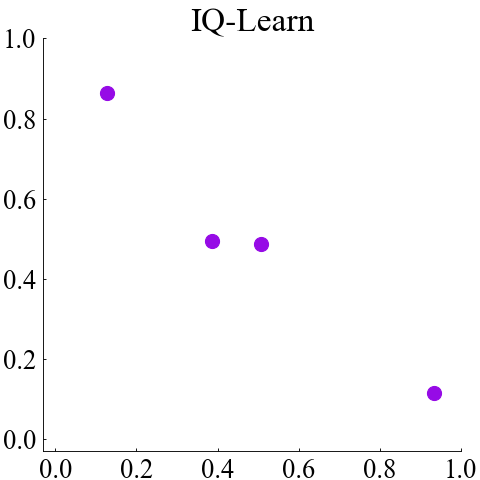}
        \includegraphics[width=0.2\textwidth]{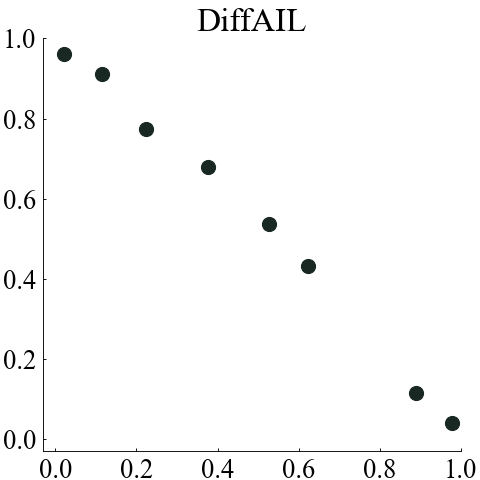}
        \includegraphics[width=0.2\textwidth]{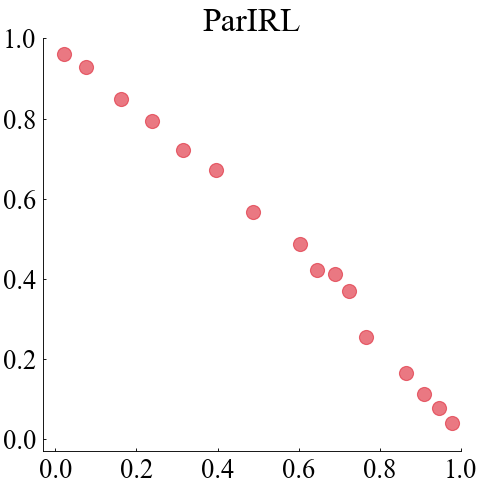}
        \includegraphics[width=0.2\textwidth]{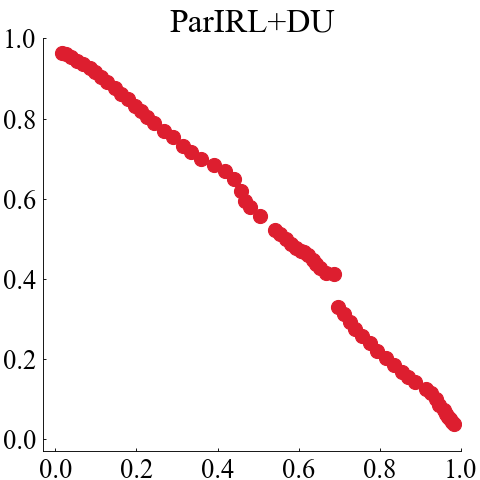}
        \end{adjustbox}
    }
    \subfigure[MO-Ant]{
        \begin{adjustbox}{width=1\textwidth}
        \includegraphics[width=0.2\textwidth]{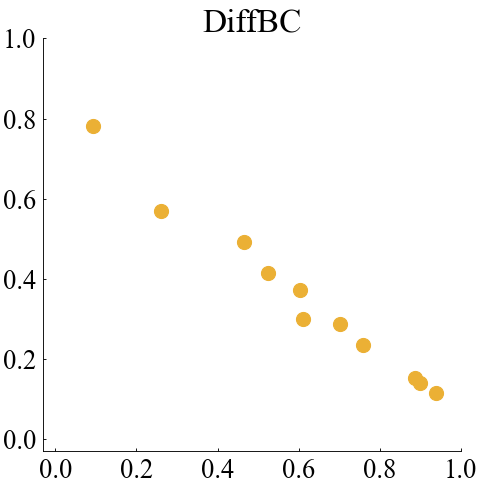}
        \includegraphics[width=0.2\textwidth]{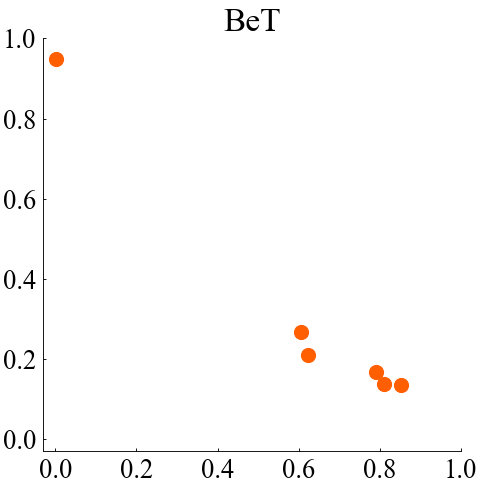}
        \includegraphics[width=0.2\textwidth]{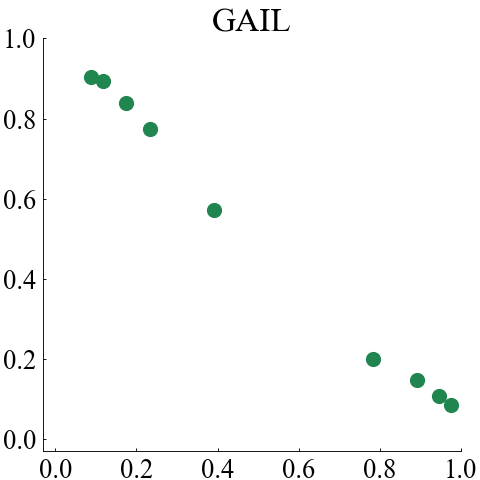}
        \includegraphics[width=0.2\textwidth]{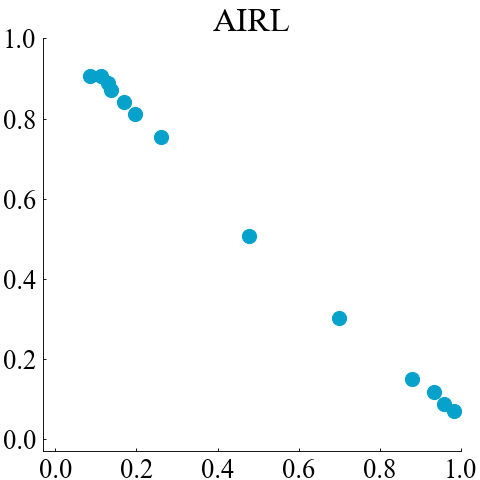}
        \includegraphics[width=0.2\textwidth]{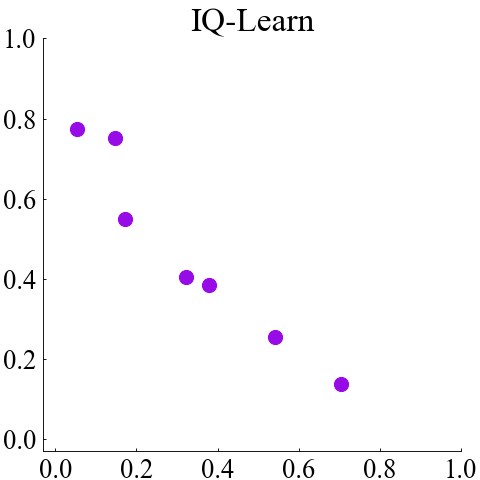}
        \includegraphics[width=0.2\textwidth]{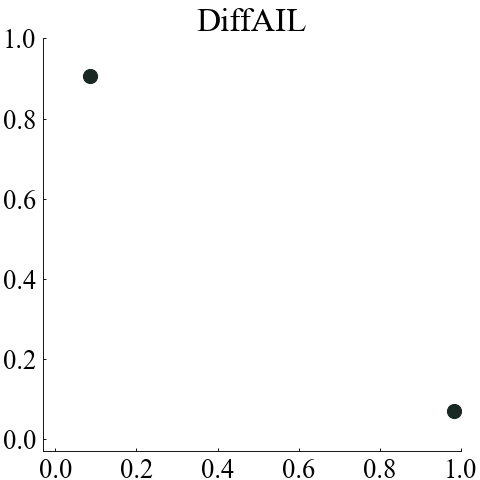}
        \includegraphics[width=0.2\textwidth]{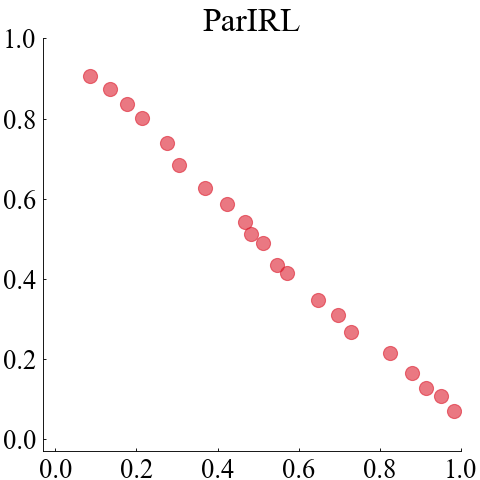}
        \includegraphics[width=0.2\textwidth]{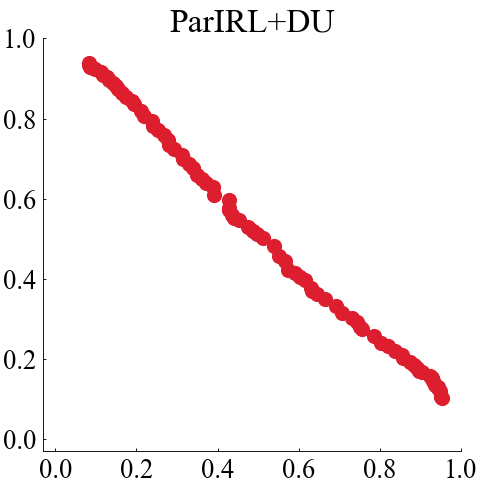}
        \end{adjustbox}
    }
    \subfigure[MO-AntXY]{
        \begin{adjustbox}{width=1\textwidth}
        \includegraphics[width=0.2\textwidth]{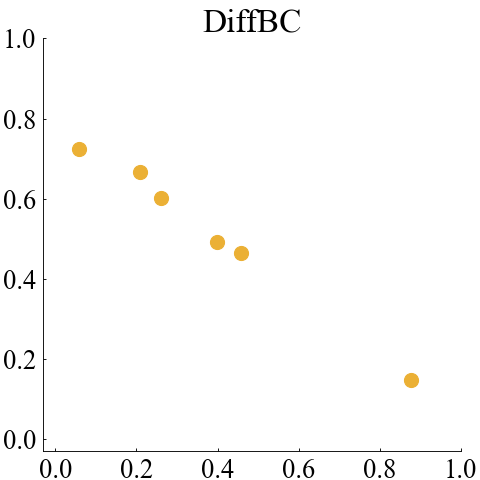}
        \includegraphics[width=0.2\textwidth]{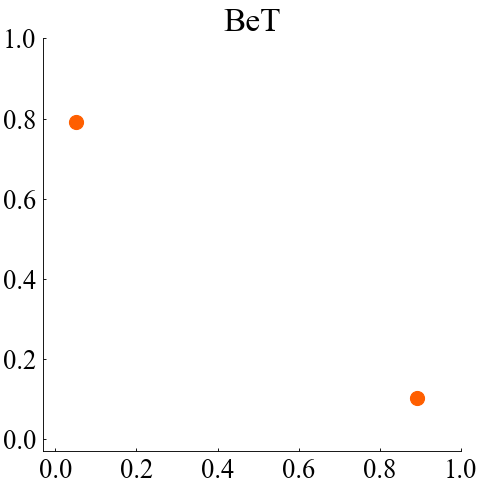}
        \includegraphics[width=0.2\textwidth]{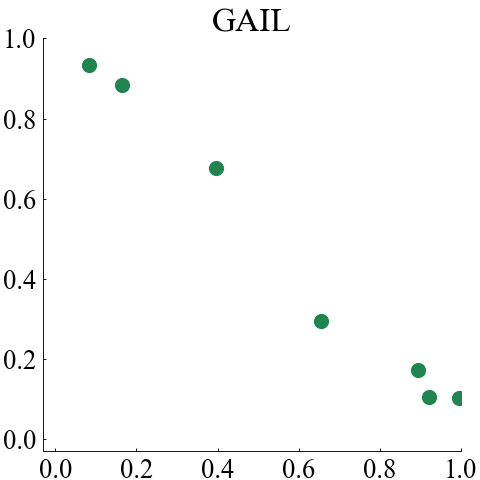}
        \includegraphics[width=0.2\textwidth]{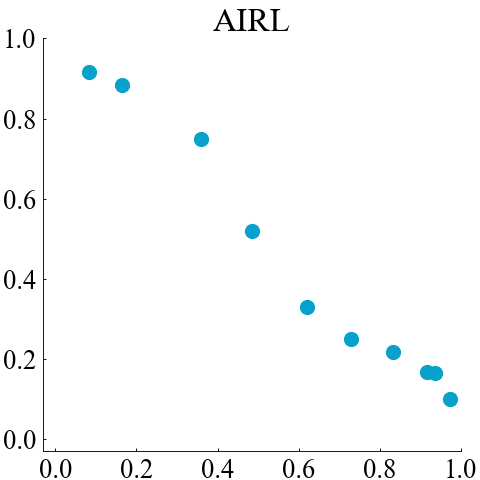}
        \includegraphics[width=0.2\textwidth]{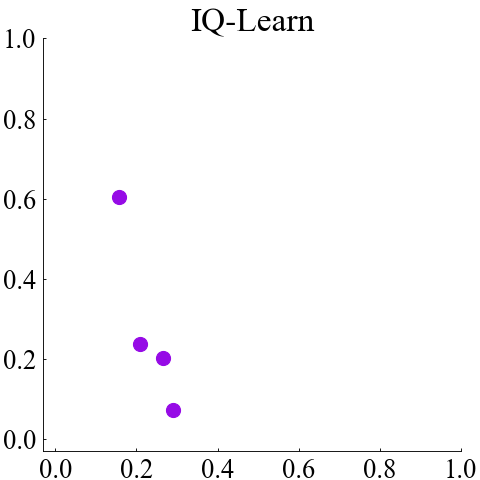}
        \includegraphics[width=0.2\textwidth]{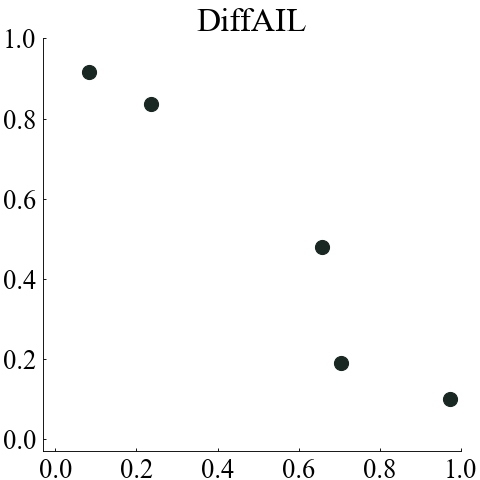}
        \includegraphics[width=0.2\textwidth]{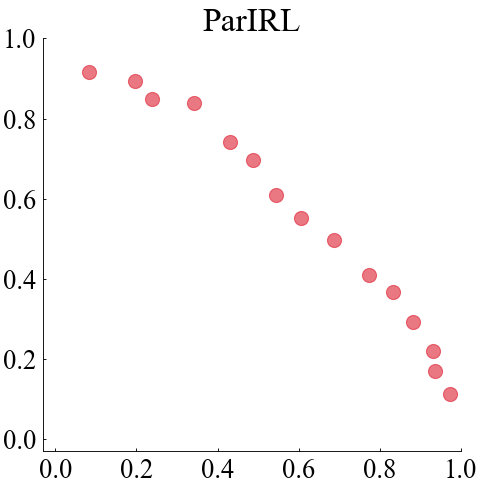}
        \includegraphics[width=0.2\textwidth]{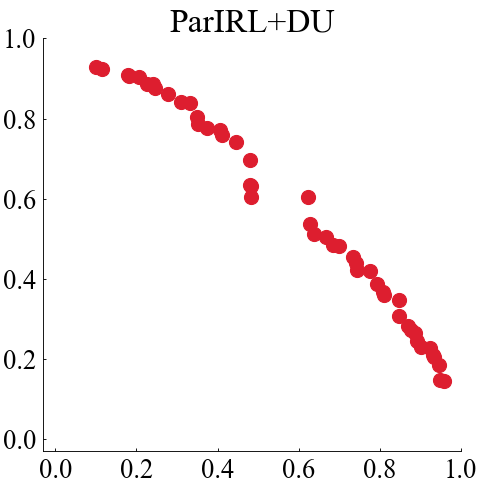}
        \end{adjustbox}
    }
    \caption{Pareto frontier visualization: the graphs in one line are for the same environment, and BC, BeT, GAIL, AIRL, IQ-Learn, DiffAIL, $\ourmodel$, and $\ourmodel$+DU are each located from the left. Note that we exclude out-of-order policies obtained from algorithms with respect to preferences.}
    \label{fig:app:pareto}
\end{figure*}

\end{document}